\newcommand{\cmark}{\ding{51}}
\newcommand{\xmark}{\ding{55}}
\newcommand{\eqD}{\mathrel{\stackrel{\mathrm{D}}{=}}}
\theoremstyle{plain}
\newtheorem{theorem}{Theorem}[section]
\newtheorem{proposition}[theorem]{Proposition}
\newtheorem{lemma}[theorem]{Lemma}
\newtheorem{corollary}[theorem]{Corollary}
\theoremstyle{definition}
\newtheorem{definition}[theorem]{Definition}
\newtheorem{assumption}[theorem]{Assumption}
\theoremstyle{remark}
\newcommand{\E}{\mathbb{E}}
\icmltitlerunning{Distributional Inverse Reinforcement Learning}
\begin{document}

\twocolumn[
  \icmltitle{Distributional Inverse Reinforcement Learning}

  % List of affiliations: The first argument should be a (short) identifier you
  % will use later to specify author affiliations Academic affiliations
  % should list Department, University, City, Region, Country Industry
  % affiliations should list Company, City, Region, Country

  % You can specify symbols, otherwise they are numbered in order. Ideally, you
  % should not use this facility. Affiliations will be numbered in order of
  % appearance and this is the preferred way.
  \icmlsetsymbol{equal}{*}

  \begin{icmlauthorlist}
    \icmlauthor{Feiyang Wu}{gtcse}
    \icmlauthor{Ye Zhao}{gtme}
    \icmlauthor{Anqi Wu}{gtcse}
  \end{icmlauthorlist}

  \icmlaffiliation{gtcse}{School of Computational Science and Engineering, Georgia Institute of Technology, Atlanta, USA}
  \icmlaffiliation{gtme}{George W. Woodruff School of Mechanical Engineering, Georgia Institute of Technology, Atlanta, USA}

  \icmlcorrespondingauthor{Feiyang Wu}{feiyangwu@gatech.edu}

  % You may provide any keywords that you find helpful for describing your
  % paper; these are used to populate the "keywords" metadata in the PDF but
  % will not be shown in the document
  \icmlkeywords{Reinforcement Learning, Neuroscience, Robotics}

  \vskip 0.3in
]

% this must go after the closing bracket ] following \twocolumn[ ...

% This command actually creates the footnote in the first column listing the
% affiliations and the copyright notice. The command takes one argument, which
% is text to display at the start of the footnote. The \icmlEqualContribution
% command is standard text for equal contribution. Remove it (just {}) if you
% do not need this facility.

% Use ONE of the following lines. DO NOT remove the command.
% If you have no special notice, KEEP empty braces:
\printAffiliationsAndNotice{}  % no special notice (required even if empty)
% Or, if applicable, use the standard equal contribution text:
% \printAffiliationsAndNotice{\icmlEqualContribution}

\begin{abstract}
  We propose a distributional framework for offline Inverse Reinforcement Learning (IRL) that jointly models uncertainty over reward functions and full distributions of returns. Unlike conventional IRL approaches that recover a deterministic reward estimate or match only expected returns, our method captures richer structure in expert behavior, particularly in learning the reward distribution, by minimizing first-order stochastic dominance (FSD) violations and thus integrating distortion risk measures (DRMs) into policy learning, enabling the recovery of both reward distributions and distribution-aware policies. This formulation is well-suited for behavior analysis and risk-aware imitation learning. Theoretical analysis shows that the algorithm converges with $\mathcal{O}(\varepsilon^{-2})$ iteration complexity. Empirical results on synthetic benchmarks, real-world neurobehavioral data, and MuJoCo control tasks demonstrate that our method recovers expressive reward representations and achieves state-of-the-art performance.
\end{abstract}

% \vspace{-0.2in}
\section{Introduction}
% \vspace{-0.6\baselineskip}

Inverse Reinforcement Learning (IRL) aims to infer an expert's underlying reward function and policy from observed trajectories collected under unknown dynamics. IRL has been successfully applied in diverse domains, including robotics \citep{vasquez2014inverse, wu2024infer}, animal behavior modeling \citep{ashwood2022dynamic, ke2025inverse}, autonomous driving \citep{rosbach2019driving, wu2020efficient}, and fine-tuning of large language models \citep{zeng2025demonstrations}. A pioneering work in this field, the Maximum Entropy IRL (MaxEntIRL) framework \citep{ziebart2008maximum}, formulates reward learning as a likelihood optimization problem and interprets expert policies as Boltzmann distributions over returns. Follow-up works have extended this framework to improve reward inference stability and generalization \citep{arora2021survey, garg2021iq, zeng2022maximum}.

Despite these advances, most IRL methods assume that the expert's reward function is deterministic, thereby recovering only a point estimate, i.e., $r(s,a)\in \mathbb{R}$ for every state $s$ and action $a$. This assumption, however, limits expressiveness in real-world settings where reward signals are inherently stochastic. For instance, in robotic manipulation tasks involving deformable or fragile objects \citep{yin2021modeling}, contact uncertainty introduces reward variability for identical state-action pairs--variability that directly influences the learned policy's robustness and safety. Similarly, in neuroscience, dopaminergic activity has been shown to act as a reward-related teaching signal that shapes animal behavior via RL-like mechanisms \citep{markowitz2023spontaneous}. Yet, dopamine signals exhibit significant trial-to-trial variations, suggesting that behavior may arise from an underlying stochastic reward distribution. These challenges are further amplified in offline IRL settings, where interaction with the environment is unavailable, and the algorithm must fully rely on fixed demonstrations.

These examples highlight that in many real-world scenarios, demonstrations may be generated under stochastic reward functions, i.e., $r(s,a)$ is a random variable. This motivates the need to go beyond point estimates and instead recover the full distribution of rewards. Prior works such as Bayesian IRL (BIRL) methods infer a posterior over reward parameters using Markov chain Monte Carlo (MCMC) \citep{ramachandran2007bayesian}, maximum a posteriori (MAP) estimation \citep{choi2011map}, or variational inference \citep{chan2021scalable}, but primarily capture uncertainty over the parameters of a deterministic reward function. Their policy likelihoods or optimality models are still driven by expected-return quantities, such as soft optimal $Q$-values, rather than by the full reward-induced return distribution. Consequently, higher-order reward structure can be invisible to the objective: two reward distributions with the same mean can induce the same expectation-level signal while differing substantially in variance, skewness, or tail behavior.

However, it remains unclear how to effectively learn reward distributions directly from expert demonstrations. A natural alternative is to compare return distributions using statistical distances such as Wasserstein distance. Such distances are useful for matching two distributions, but they do not by themselves encode the IRL preference relation that the expert should be better than the learner under the recovered reward. In other words, distribution matching and distributional ordering are distinct requirements. Consequently, a principled framework is needed to lift the expert-preference condition from expected returns to full return distributions while simultaneously supporting return-distribution estimation in the offline IRL setting.

To this end, we introduce \textit{Distributional Inverse Reinforcement Learning} (DistIRL), a novel framework that explicitly models both the distributional nature of reward and the return. 
This allows us to capture stochasticity not only from transitions and policies but also from the reward function itself. 
Specifically, for reward learning, instead of matching expected returns as in MaxEntIRL, we propose to compare full return distributions using a First-order Stochastic Dominance (FSD) criterion. This allows us to capture not only the mean but also higher-order moments of the return distribution and thus capture the full landscape of reward distributions, leading to a richer and more faithful estimate of the underlying reward structure. To the best of our knowledge, \textit{this is the first offline IRL framework to learn full reward distributions in a principled manner while also learning distribution-aware policies.} 

It is important to note that while our framework incorporates risk-sensitive policy learning, risk sensitivity primarily serves as a mechanism that enables robust reward distribution learning in the offline IRL setting. The connection is explained in detail in Sec.~\ref{sec:risk-aware-policy-learning}.
Our contributions in this paper are summarized as follows:\\
(1) \textbf{Reward Distribution Learning.} We propose an intuitive framework for learning reward distributions in the offline IRL setting. With an FSD objective emphasizing the entire distribution, we are able to learn reward distributions beyond the first moment.\\
(2) \textbf{Distribution-aware Policy Learning.} Our algorithm learns the return distribution and recovers the distribution-aware policy, extending the modeling capability of IRL frameworks towards a broader range of behavior analyses and facilitating imitation learning in risk-sensitive scenarios.\\
(3) \textbf{Theoretical Analysis.} We develop a convergence-rate analysis for the proposed algorithm for solving DistIRL, showing that the algorithm converges with $\mathcal{O}(\varepsilon^{-2})$ iteration complexity.\\
(4) \textbf{Empirical Validation.} We demonstrate that our method recovers meaningful reward distributions on synthetic and real-world datasets, including neurobehavioral data. Our algorithm also achieves state-of-the-art performance on high-dimensional robotic control tasks in offline IRL settings. 
% \vspace{-\baselineskip}
\section{Related Work}
% \vspace{-0.7\baselineskip}

% Offline Inverse RL
\paragraph{Inverse Reinforcement Learning}
Traditional offline IRL algorithms recover a reward function by matching expert feature expectations or maximizing an entropy-regularized likelihood. Apprenticeship learning \citep{abbeel2004apprenticeship} and maximum entropy / maximum causal entropy IRL \citep{ziebart2008maximum, ziebart2010modeling, gleave2022primer} infer a deterministic reward whose induced policy reproduces expert behavior in expectation. Subsequent deep IRL and imitation-learning variants incorporate neural function approximators or adversarial objectives \citep{ho2016generative, jeon2018bayesian, wulfmeier2015maximum, ni2021f, garg2021iq, zeng2022maximum, viano2021robust, bloem2014infinite, wu2024diffusing, zhan2024model}. Several of these methods require online interaction with a simulator or a learned dynamics model during training, which is undesirable or infeasible in strictly offline settings such as modeling animal behavior from fixed recordings. Recent offline or model-based IRL approaches \citep{zeng2023demonstrations, kostrikov2019imitation} reduce this dependence, but still operate primarily through deterministic rewards or expectation-level matching.
Finally, while recent work has explored risk-aware policy learning within the IRL framework \citep{singh2018risk, lacotte2019risk, cheng2023eliciting,bashiri2021distributionally}, these approaches do not infer a stochastic reward distribution itself. We show a detailed comparison across modeling assumptions in Appendix~\ref{tab:lit_review}.

% Bayesian IRL
\paragraph{Bayesian Inverse Reinforcement Learning}
Bayesian IRL (BIRL) methods infer a posterior distribution over reward parameters to quantify uncertainty in reward estimation. Ramachandran and Amir \citep{ramachandran2007bayesian} introduce Bayesian IRL using MCMC to sample from a reward posterior under a Boltzmann-rationality likelihood. Follow-up works use related frameworks to handle larger state spaces and richer reward priors \citep{choi2011map, levine2011nonlinear, chan2021scalable, li2023internally}. Although these methods capture parameter uncertainty, they still rely on expected-return optimality models and do not exploit the full reward-induced return distribution. In continuous action spaces, exact Boltzmann likelihoods can also be difficult to normalize or differentiate. In contrast, we propose a scalable algorithm for learning full reward distributions through a variational distributional objective.
% Distributional RL

% \vspace{-\baselineskip}
\paragraph{Distributional Reinforcement Learning} 
DistRL extends classical value-based methods by modeling the full distribution of returns rather than only their expectation. Early work, such as Categorical DQN (C51) \citep{bellemare2017distributional} and Quantile Regression DQN (QR-DQN) \citep{dabney2018distributional}, demonstrates that learning a distributional critic improves stability and sample efficiency. More recent advances include Implicit Quantile Networks (IQN) \citep{dabney2018implicit}, Implicit Q-Learning \citep{kostrikov2021offline}, Multivariate Distribution RL \citep{wiltzer2024foundations}, and diffusion processes for RL \citep{hansen2023idql, li2024bellman}. Standard DistRL still typically optimizes expected return after learning a distributional critic. Risk-sensitive extensions \citep{lim2022distributional, schneider2024learning} optimize risk measures such as CVaR, showing that policies can be shaped by emphasizing specific regions of the return distribution. Related work also studies policy families for multiple or variable risk measures, including distributional Pareto-optimal multi-objective RL \citep{cai2023distributionalpareto} and risk-conditioned RL \citep{yoo2024riskconditioned}. These directions are complementary to ours: they adapt forward RL policies to risk preferences, whereas DistIRL infers a reward distribution from demonstrations.

Recent imitation-learning work also studies matching the expert's return distribution directly \citep{lazzati2026returnmatching}. This is an important adjacent direction, but it assumes a known reward in its main formulation and targets return-distribution matching for policy learning. Our problem is different: the reward is unknown, and the central object to infer is the reward distribution itself. IRL counterparts using distributional critics \citep{lee2022risk, karimirize} remain limited in scope because they still assume deterministic reward functions and follow a MaxEntIRL-style mean-matching blueprint.
\begin{figure}[h!]
\centering
\includegraphics[width=0.8\linewidth]{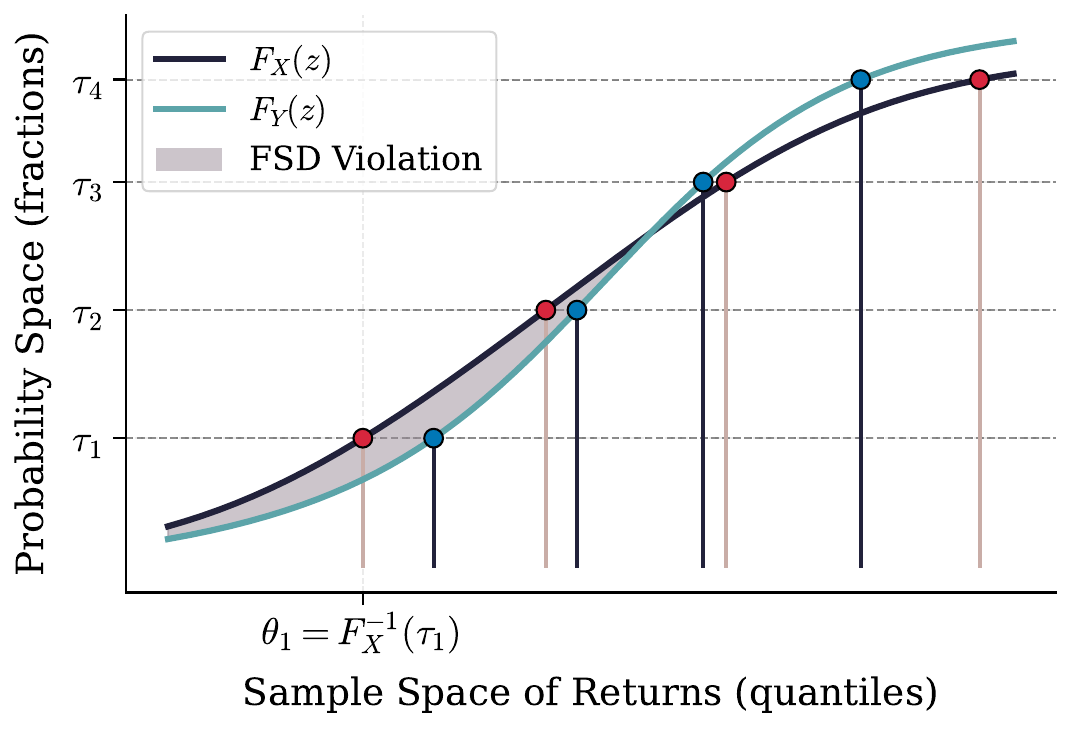}
\caption{Illustration of quantile functions and first-order stochastic dominance (FSD).}
\label{fig:quantile-example}
\end{figure}

\section{Preliminaries}
We model the environment as a discounted Markov Decision Process (MDP)
\((\mathcal{S}, \mathcal{A}, P, r, \gamma)\), where 
\(\mathcal{S}\) denotes the state space,
\(\mathcal{A}\) the action space,
\(P(s'| s,a)\) the transition kernel, 
and \(\gamma \in [0,1)\) the discount factor.
The reward function is a (integrable) random variable
$
r : (\Omega, \mathcal{F}, \mathbb{P}) \to (\mathbb{R}, \mathcal{B}(\mathbb{R})),
$
so that for each state--action pair \((s,a)\), the reward \(r(s,a)\) induces a probability distribution over
\((\mathbb{R}, \mathcal{B}(\mathbb{R}))\).
Here \(\Omega\) is the sample space, \(\mathcal F\) is a \(\sigma\)-algebra of events, \(\mathbb P\) is the probability measure, and \(\mathcal B(\mathbb R)\) denotes the Borel \(\sigma\)-algebra on \(\mathbb R\).
A policy \(\pi(a | s)\) generates a trajectory
\((s_0,a_0,s_1,a_1,\ldots)\), and the return is the random variable
$
Z^\pi \;=\; \sum_{t=0}^{\infty} \gamma^t\, r(s_t, a_t).
$

\subsection{Maximum Entropy IRL}
Given demonstrations \(\{(s_t, a_t)\}_{t\geq 1}\) collected by an unknown expert policy $\pi^E$, MaxEntIRL \citep{ziebart2008maximum} aims to recover the unknown policy, and the corresponding reward function \(r\) the policy is optimized to. Specifically, we consider the following formulation \citep{ho2016generative}:
\begin{equation} \label{eq:old-irl-objective}
\max_{\pi} \min_{r} \E_{d^\pi}[r(s,a)] - \E_{d^{\pi^E}}[r(s,a)] + \mathcal{H}(\pi) + \psi(r),
\end{equation}

where \(d^\pi(s,a)=(1-\gamma)\sum_{t=0}^{\infty}\gamma^t\Pr(s_t=s)\pi(a|s)\) denotes the discounted state-action occupancy measure induced by \(\pi\), \(\mathcal{H}:=\E_{d^\pi} [-\log \pi(a|s)]\) denotes the entropy, and \(\psi\) is a general convex regularizer. 
This formulation reduces to MaxEntIRL if \(\psi=0\). If \(\psi\) is instantiated as a KL penalty between a reward posterior approximation \(q\) and a prior \(p_0\), it resembles the regularized Bayesian objective used in BIRL-style formulations, while the optimal policy still follows a Boltzmann distribution of the action-values
% \footnote{The Kullback-Leibler divergence is convex in its first argument when the second argument is fixed.}
.

% Reward distribution & relation to bayesian
\section{Distributional Inverse Reinforcement Learning Framework}

In our model, we treat the reward as a \textit{distribution} rather than a deterministic function. During optimization, the first two terms in Eq.~\ref{eq:old-irl-objective}, 
\(\mathbb{E}_{d^\pi}[r(s,a)] - \mathbb{E}_{d^{\pi^E}}[r(s,a)]\), enforce \textit{mean dominance}--that is, the learned reward should make the expert no worse than the learned policy in expected return. At optimality, this difference becomes zero, indicating \textit{mean matching} between expert and agent returns. However, if the reward is inherently a distribution, mean matching alone fails to capture the relationship between the expert's return distribution and the agent's in its entirety. This leads to a loss of higher-order information in the reward. To accurately model the full reward distribution, we must impose a \textit{distributional form of dominance} during optimization, ensuring that the entire return distribution is aligned at optimality, not just the mean.
We therefore define an ordering over entire distributions.   
\begin{definition}[First-Order Stochastic Dominance (FSD) \citep{hadar1969rules}]
Let \(X\) and \(Y\) be real-valued integrable random variables with cumulative distribution functions \(F_X\) and \(F_Y\). We say that \(X\) \emph{first-order stochastically dominates} \(Y\), written as \(X \succeq_{\rm FSD} Y\), if
$ F_X(z)\;\le\;F_Y(z), \forall\,z\in\mathbb{R}.$
\end{definition}

The concept of FSD is illustrated in Fig.~\ref{fig:quantile-example}. If we aim for \(X \succeq_{\mathrm{FSD}} Y\), then the shaded region indicates a violation of this condition. FSD has an equivalent definition relating to utility functions, which further implies mean dominance. 
\begin{proposition}[Theorem 1-2~\citep{hadar1969rules}]
For real-valued \(X\) and \(Y\), the following are equivalent:
\begin{enumerate}
  \item \(F_X(z)\le F_Y(z)\) for all \(z\in\mathbb{R}\).

  \item \(\displaystyle \mathbb{E}[\,u(X)\,]\;\ge\;\mathbb{E}[\,u(Y)\,]\) for every non-decreasing utility function \(u:\mathbb{R}\to\mathbb{R}\).
\end{enumerate}

\end{proposition}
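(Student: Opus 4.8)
The plan is to establish the two implications separately, with $(2)\Rightarrow(1)$ essentially immediate and $(1)\Rightarrow(2)$ resting on a monotone (quantile) coupling.

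For $(2)\Rightarrow(1)$ I would fix $z\in\mathbb{R}$ and instantiate the hypothesis at the bounded non-decreasing utility $u_z(t)=\mathbf{1}\{t>z\}$. Then $\E[u_z(X)]=\Pr(X>z)=1-F_X(z)$ and $\E[u_z(Y)]=1-F_Y(z)$, so $\E[u_z(X)]\ge\E[u_z(Y)]$ rearranges to $F_X(z)\le F_Y(z)$; since $z$ is arbitrary, this is exactly condition~1. No integrability issue arises here because $u_z$ is bounded.

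For $(1)\Rightarrow(2)$ the key device is the generalized inverse (quantile function) $F_X^{-1}(p):=\inf\{z\in\mathbb{R}:F_X(z)\ge p\}$ together with the standard quantile-transform lemma: if $U\sim\mathrm{Uniform}(0,1)$ then $F_X^{-1}(U)\eqD X$. First I would show that condition~1, namely $F_X(z)\le F_Y(z)$ for all $z$, is equivalent to the reversed pointwise inequality on quantiles, $F_X^{-1}(p)\ge F_Y^{-1}(p)$ for all $p\in(0,1)$; this is a direct consequence of the definition of the infimum and the monotonicity of CDFs. Then, on a single probability space carrying $U\sim\mathrm{Uniform}(0,1)$, I would set $\tilde X:=F_X^{-1}(U)$ and $\tilde Y:=F_Y^{-1}(U)$, so that $\tilde X\eqD X$, $\tilde Y\eqD Y$, and $\tilde X\ge\tilde Y$ almost surely. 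For any non-decreasing $u$ this gives $u(\tilde X)\ge u(\tilde Y)$ pointwise, and taking expectations yields $\E[u(X)]=\E[u(\tilde X)]\ge\E[u(\tilde Y)]=\E[u(Y)]$, which is condition~2. (An alternative route for smooth $u$ is integration by parts, giving $\E[u(X)]-\E[u(Y)]=\int u'(z)\bigl(F_Y(z)-F_X(z)\bigr)\,dz\ge0$ followed by an approximation argument; I prefer the coupling since it requires no smoothness and no decay of boundary terms.)

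The main obstacle is rigor in the forward direction when $u$ is an arbitrary non-decreasing function that may be unbounded or discontinuous, in which case $u(X)$ and $u(Y)$ need not be integrable. I would first prove the inequality for the truncations $u_n:=(u\wedge n)\vee(-n)$, for which all expectations are finite, and then pass to the limit using monotone convergence on the positive and negative parts of $u$ together with the almost-sure ordering $u(\tilde X)\ge u(\tilde Y)$; this shows the inequality persists in the extended reals, so in particular integrability of $u(X)$ forces integrability of $u(Y)$. A secondary technical point I would cite rather than reprove is the quantile-transform lemma itself, along with the one-sided continuity/monotonicity of $F_X^{-1}$ needed for measurability of $\tilde X,\tilde Y$ and for the almost-sure comparison.
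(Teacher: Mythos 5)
Your proof is correct. Note, however, that the paper does not prove this proposition at all—it is quoted directly from Hadar and Russell (1969) as a known result, so there is no in-paper argument to compare against. Your two directions are the standard ones and both are sound: instantiating condition~2 at the indicator utilities $u_z(t)=\mathbf{1}\{t>z\}$ immediately yields the CDF inequality, and the monotone coupling $\tilde X=F_X^{-1}(U)$, $\tilde Y=F_Y^{-1}(U)$ with $U\sim\mathrm{Uniform}(0,1)$ gives the forward direction without any smoothness or tail-decay assumptions, which is exactly why it is preferable to the integration-by-parts route you mention as an alternative. Your handling of the two technical points is also appropriate: the reversal of the pointwise inequality under the generalized inverse ($F_X\le F_Y$ pointwise iff $F_X^{-1}\ge F_Y^{-1}$ pointwise) follows from the definition of the infimum together with monotonicity and right-continuity of CDFs, and the truncation-plus-monotone-convergence step correctly extends the inequality to unbounded or discontinuous $u$ in the extended-real sense. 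One small remark: the coupling lemma and the quantile-inequality reversal are precisely the ingredients the paper does use elsewhere (in Proposition~A.1 and in the proof of the distortion-measure sufficiency result), so your argument is stylistically consistent with the rest of the appendix.
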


\begin{corollary}[Mean Dominance]
If \(X\succeq_{\rm FSD} Y\), it follows that
$\mathbb{E}[X]\;\ge\;\mathbb{E}[Y],$
as the identity utility \(u(x)=x\) is non-decreasing.
\end{corollary}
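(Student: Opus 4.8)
The plan is to derive the Mean Dominance corollary directly from the utility-characterization of FSD given in the preceding proposition, rather than from the CDF definition. Concretely, suppose $X \succeq_{\rm FSD} Y$. By the stated equivalence (Proposition, item 1 $\iff$ item 2), this implies that $\mathbb{E}[u(X)] \ge \mathbb{E}[u(Y)]$ holds for \emph{every} non-decreasing utility function $u:\mathbb{R}\to\mathbb{R}$.

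The key step is then to instantiate this universally quantified statement at one particular choice of $u$. I would pick the identity map $u(x) = x$, observe that it is non-decreasing on $\mathbb{R}$ (indeed strictly increasing, which is more than enough), and conclude $\mathbb{E}[X] = \mathbb{E}[u(X)] \ge \mathbb{E}[u(Y)] = \mathbb{E}[Y]$. This is exactly the claim. The only hypothesis needed beyond FSD is that the expectations exist, which is guaranteed because the Definition of FSD already restricts attention to integrable random variables.

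There is essentially no obstacle here: the entire content of the corollary is the single substitution $u = \mathrm{id}$ into the proposition, so the ``hard part'' — namely the equivalence between the CDF condition and the utility condition — has already been imported as Proposition (Theorem 1--2 of \citep{hadar1969rules}) and may be used as a black box. If one instead wished to avoid citing the utility characterization, the alternative route is the layer-cake / integration-by-parts identity $\mathbb{E}[X] - \mathbb{E}[Y] = \int_{\mathbb{R}} \bigl(F_Y(z) - F_X(z)\bigr)\,dz$, valid for integrable $X,Y$, whose integrand is pointwise nonnegative under FSD; but given that the proposition is already available, the one-line utility argument is the cleanest and is the one I would present.
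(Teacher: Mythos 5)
Your proof is correct and matches the paper's own argument exactly: the paper justifies the corollary precisely by invoking the utility characterization from the preceding proposition with $u(x)=x$, which is the same single substitution you make. Your added remarks about integrability and the alternative layer-cake route are fine but not needed.
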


We model the reward as a conditional distribution, \(r_t \sim q_\phi(\cdot | s_t, a_t)\), and define the random return for a trajectory \((s_0, a_0, \dots)\) sampled from policy \(\pi\) as 
\(Z^{\pi} = \sum_{t=0}^\infty \gamma^t r_t\). 
We now introduce the distributional counterpart to Eq.~\ref{eq:old-irl-objective}, the objective for distributional IRL, expressed as
\begin{align}\label{eq:distirl_obj}
    \max_{\pi} \min_{r} \mathcal{L}(\pi, r) :=
    \max_{\pi} \min_{r}
        &\int_{-\infty}^\infty \!\!\!\![F_{Z^E}(z) - F_{Z^\pi}(z)]_+ dz \notag \\
    &+ \mathcal{H}(\pi) + \psi(r),
\end{align}
where $Z^E$ is the return distribution of the expert policy.
\subsection{Learning Reward through Stochastic Dominance}
From Eq.~\ref{eq:distirl_obj}, the objective of the reward function is 
\begin{align*}
   &\min_r \left\{\mathcal{L}_{\text{FSD}}(\pi,r) + \psi(r)\right\}
   =\\
   &
   \min_r \left\{\int_{-\infty}^\infty\!\!\!\! [F_{Z^E}(z) - F_{Z^\pi}(z)]_+ dz + \psi(r)\right\}.
\end{align*}

This objective minimizes the violation of FSD, drawing inspiration from the Kolmogorov-Smirnov (K-S) test~\citep{massey1951kolmogorov}.
To model the reward distribution in a principled manner, we treat
$\mathcal{L}_{\mathrm{FSD}}(\pi, r)$ as an \emph{energy function} that scores how compatible a proposed reward $r$ is with the expert demonstrations.
In particular, we define a likelihood function over the expert demonstrations \(\mathcal{D}\) using the Energy-Based Model (EBM) formulation~\citep{lecun2006tutorial}: \(p(\mathcal{D} | r) \propto \exp\left(-\mathcal{L}_{\text{FSD}}(\pi, r)\right),\)
so that reward functions that yield small FSD violations are exponentially more likely under the expert data.  
This construction is natural here because FSD does not provide an explicit probabilistic model, but \emph{does} provide a calibrated energy landscape that reflects goodness-of-fit.
A more detailed discussion can be found in Appendix~\ref{app:fsd-energy}.

We also introduce a \textit{prior distribution} \(p_0(r)\), which reflects our initial belief before observing any data. The goal is to infer the posterior distribution \(p(r | \mathcal{D})\) using Bayes' rule. As direct inference under the EBM formulation is generally intractable, we adopt the variational inference framework~\citep{blei2017variational} by introducing a \textit{variational reward distribution} \(q_\phi(r | s, a)\), parameterized by \(\phi\), to approximate the posterior and optimize the \textit{evidence lower bound (ELBO)}:
\begin{equation*}
    \mathbb{E}_{q_\phi(r | s, a)}\left[\log p(\mathcal{D} | r)\right] - \mathrm{KL}\left(q_\phi(r | s, a) \,\|\, p_0(r)\right).
\end{equation*}
Substituting the energy-based likelihood into it yields:
\begin{equation} \label{eq:reward-update}
\mathcal{L}_r(\phi) :=\mathbb{E}_{q_\phi(r | s, a)}\left[ \mathcal{L}_{\text{FSD}}(\pi, r) \right] + \mathrm{KL}\left(q_\phi(r) \| p_0\right).
\end{equation}

\noindent Notice the natural relationship between KL and \(\psi\). Formally, we learn the reward distribution by solving Eq.~\ref{eq:reward-update}. To compute the gradient of the first term, we apply the Inverse Transform Sampling technique \citep{devroye2006nonuniform}. We use the empirical quantile to approximate the quantile of the return. Specifically, using the change of variable formula and the relation between CDF and quantile, we have
   $\int_{-\infty}^\infty [F_{Z^E}(z) - F_{Z^\pi}(z)]_+ dz = \int_0^1 \bigl[F_{Z^\pi}^{-1}(v)- F_{Z^E}^{-1}(v)\bigr]_+ \,\mathrm{d} v.$
We provide a short proof of the above relation in Appendix~\ref{prop_A.1}.
To approximate \( F_{Z^\pi}^{-1}\), we draw \(N\) samples \(\{z_n\}\) by Monte Carlo sampling \(z_n = \sum_{t=0}^\infty \gamma^t r_t, r_t\sim q_{\phi}(\cdot| s_t, a_t)\),
and form the empirical quantile using sorted order statistics \(z_{(1)}\le \cdots \le z_{(N)}\), with \(F_{Z^\pi}^{-1}(k/N)\approx z_{(k)}\). 
As a result, minimizing \(\mathcal{L}_r(\phi)\) generalizes the usual IRL objective of matching expected returns by aligning higher-order moments.

% Risk aware policy learning
\subsection{Risk-aware Policy Learning} \label{sec:risk-aware-policy-learning}
Once the inner minimization over \(r\) yields a fixed reward distribution, the policy, parameterized by \(\varphi\), is updated by maximizing the following objective:
\begin{equation} \label{eq:policy-learning}
\mathcal{L}_\pi(\varphi)= \int_{0}^1 [F_{Z^{\pi_{\varphi}}}^{-1}(v) - F_{Z^E}^{-1}(v)]_+ dv + \mathcal{H}(\pi_{\varphi}).
\end{equation}

\noindent Define \(\mathcal{I}(v) := \mathbbm{1}_{F_{Z^{\pi_{\varphi}}}^{-1}(v) \geq F_{Z^E}^{-1}(v)}\). Fig.~\ref{fig:quantile-example} shows that \(\mathcal{I}(v)\) takes the value 1 in regions where FSD is violated (shaded area), and 0 otherwise. We then rewrite the objective in Eq.~\ref{eq:policy-learning} as 
\begin{equation}\label{eq:iv}
    \int_{0}^1 \left( F_{Z^{\pi_{\varphi}}}^{-1}(v) - F_{Z^E}^{-1}(v) \right) \mathcal{I}(v) dv + \mathcal{H}(\pi_{\varphi}).
\end{equation}

\noindent Note that the indicator function \(\mathcal{I}\) depends on the current policy, the expert policy, and the quantile level \(v\). Conceptually, \(\mathcal{I}\) assigns weight only to regions of the return distribution where FSD is violated. The policy now aims to increase these FSD violations---encouraging the agent to obtain higher return samples in those regions. This leads to a maximization scheme that is inherently risk-aware, as it requires reasoning over the full return distribution rather than just its expectation.

Unfortunately, directly optimizing Eq.~\ref{eq:policy-learning} is intractable, as the indicator function \(\mathcal{I}\) is not observable during training. To address this, we take a broader perspective on risk-aware policy learning and propose replacing \(\mathcal{I}(v)\) with a risk measure that retains the goal of encouraging risk-sensitive behavior while yielding a tractable objective. Furthermore, we show that the resulting surrogate objective provides a weaker form of optimality, but under certain conditions, it can theoretically achieve the same optimum as Eq.~\ref{eq:policy-learning}. To present our new objective, we need a few essential concepts.
\begin{definition}[Distortion function]
    A distortion function \(\xi\) is a non-decreasing function \(\xi:[0,1]\rightarrow [0,1]\) such that \(\xi(0)=0, \xi(1)=1\).
\end{definition}
\begin{definition}[Distortion Risk Measure (DRM) \citep{dhaene2012remarks}] \label{eq:drm}
    For an integrable random variable $X$, and a distortion function \(\xi\), a Distortion Risk Measure \(M_{\xi}\) is defined as $M_{\xi}(X) = \int_0^1 F_X^{-1}(v) d\tilde{\xi}(v)$ where \(\tilde{\xi} = 1-\xi(1-v) \geq 0\) is the dual distortion function.
\end{definition}
Common examples of DRMs and distortion functions are listed in Table~\ref{table:spectral_risk}. These measures offer various ways to quantify risk based on the return distribution. When \(d\tilde{\xi}\) admits a density, the resulting DRM is often called a spectral risk measure; in the paper we use the broader term DRM. Intuitively, when \(\tilde{\xi}\) is concave, it places greater emphasis on lower returns, thereby encouraging risk-averse behavior.
To induce risk-aware policies using distortion $\xi(v)$, we maximize the DRM defined in Def.~\ref{eq:drm}. In all main experiments, \(\xi\) is fixed within each run; unless otherwise stated, we use CVaR with risk parameter \(0.05\), with additional DRM choices studied in Appendix~\ref{appendix:drm_choices}.

Building on the above definitions, we propose replacing \(\mathcal{I}(v)\) with \(\tilde{\xi}(v)\) in Eq.~\ref{eq:iv}, resulting in:
$\max_{\varphi} \int_{0}^1 \left( F_{Z^\pi}^{-1}(v) - F_{Z^E}^{-1}(v) \right) d\tilde{\xi}(v) + \mathcal{H}(\pi) = \max_{\varphi} \int_{0}^1  F_{Z^\pi}^{-1}(v) d\tilde{\xi}(v) + \mathcal{H}(\pi).
$
The equality holds because the expert policy does not depend on \(\varphi\). We denote the final objective as
\begin{equation}\label{eq:policy-update}
    \mathcal{L}_{\pi}(\varphi) 
    % &:= \max_{\varphi} M_{\xi}(Z^{\pi_{\varphi}}) + \mathcal{H}(\pi_\varphi)\notag \\ &
    = \int_{0}^1  F_{Z^{\pi_\varphi}}^{-1}(v) d\tilde{\xi}(v) + \mathcal{H}(\pi_\varphi),
\end{equation}

\noindent where \(M_{\xi}\) is a chosen DRM with a distortion function \(\xi\).

\textit{\underline{Relation to Eq.~\ref{eq:policy-learning}.}}
Additionally, we know that \(X \succeq_{\rm FSD} Y\Rightarrow M_{\xi}(X) \geq M_{\xi}(Y)\) \citep{sereda2010distortion}. A natural question is which conditions are sufficient for FSD. We observe that the converse requires stronger conditions.
\begin{restatable}{proposition}{sufficiencylemma}
    \(M_{\xi}(X) \geq M_{\xi}(Y)\) for every distortion function \(\xi\) implies \(X \succeq_{\rm FSD} Y\).
\end{restatable}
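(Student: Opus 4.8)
The plan is to prove the contrapositive: assuming $X \not\succeq_{\rm FSD} Y$, I will construct a single distortion function $\xi$ witnessing $M_{\xi}(X) < M_{\xi}(Y)$. The first step is a change of variable at the level of distortion functions. Writing $M_{\xi}(X) = \int_0^1 F_X^{-1}(v)\,d\tilde\xi(v)$ with $\tilde\xi(v) = 1-\xi(1-v)$, I would check that $\xi \mapsto \tilde\xi$ maps distortion functions to distortion functions (non‑decreasing, $\tilde\xi(0)=1-\xi(1)=0$, $\tilde\xi(1)=1-\xi(0)=1$) and is an involution, $\tilde{\tilde\xi}=\xi$; hence it is a bijection on the set of distortion functions. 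Consequently the hypothesis ``$M_{\xi}(X)\ge M_{\xi}(Y)$ for every distortion $\xi$'' is equivalent to the statement that for every distortion function $\eta$,
\[
\int_0^1 \bigl(F_X^{-1}(v)-F_Y^{-1}(v)\bigr)\,d\eta(v)\;\ge\;0 .
\]
The second ingredient is the standard equivalence between the CDF form and the quantile form of FSD: with the (left‑continuous) generalized inverse $F^{-1}(v)=\inf\{z:F(z)\ge v\}$, one has $F_X(z)\le F_Y(z)$ for all $z$ if and only if $F_X^{-1}(v)\ge F_Y^{-1}(v)$ for all $v\in(0,1)$; I would include the two‑line justification via the inclusion $\{z:F_X(z)\ge v\}\subseteq\{z:F_Y(z)\ge v\}$.

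Now suppose FSD fails. By the quantile characterization there is a level $v_0\in(0,1)$ with $F_X^{-1}(v_0)<F_Y^{-1}(v_0)$; concretely, if $F_X(z_0)>F_Y(z_0)$ one takes any $v_0\in(F_Y(z_0),\min(F_X(z_0),1))$, which forces $F_X^{-1}(v_0)\le z_0<F_Y^{-1}(v_0)$. Using left‑continuity of $F_Y^{-1}$ together with monotonicity of $F_X^{-1}$, there exist $\delta>0$ with $v_0-\delta>0$ and $\varepsilon>0$ such that $F_Y^{-1}(v)-F_X^{-1}(v)\ge\varepsilon$ for all $v\in(v_0-\delta,\,v_0]$. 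I then take $\eta$ to be any continuous non‑decreasing function equal to $0$ on $[0,v_0-\delta]$, rising to $1$ on $[v_0-\delta,v_0]$, and equal to $1$ on $[v_0,1]$; this is an admissible distortion function whose Lebesgue--Stieltjes measure $d\eta$ is a probability measure supported in $(v_0-\delta,v_0]$, so $\int_0^1(F_X^{-1}-F_Y^{-1})\,d\eta \le -\varepsilon<0$. This contradicts the displayed inequality, i.e. $M_{\xi}(X)<M_{\xi}(Y)$ for $\xi=\tilde\eta$, which proves the proposition. (One could instead take $\eta=\mathbbm{1}\{v\ge v_0\}$, a point mass at $v_0$, and read off $M_{\xi}(X)=F_X^{-1}(v_0)<F_Y^{-1}(v_0)=M_{\xi}(Y)$ directly; the interval version only serves to sidestep discussing which one‑sided value the atom selects.)

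The main obstacle I anticipate is measure‑theoretic bookkeeping rather than any conceptual difficulty: ensuring the generalized quantile functions are finite on $(0,1)$ (which uses integrability/real‑valuedness of $X,Y$), invoking the CDF‑to‑quantile equivalence and the left‑continuity of $F^{-1}$ correctly, and verifying that the hand‑built $\eta$ genuinely satisfies $\eta(0)=0$, $\eta(1)=1$ so that it is a legitimate distortion function with a total‑mass‑one Stieltjes measure. Each step is routine, but this is where an otherwise one‑line argument is easiest to get wrong.
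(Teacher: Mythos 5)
Your proof is correct and follows essentially the same strategy as the paper's: exhibit a distortion function whose Stieltjes measure is concentrated on the region where \(F_X^{-1}<F_Y^{-1}\), forcing \(M_{\xi}(X)-M_{\xi}(Y)=\int_0^1\bigl(F_X^{-1}(v)-F_Y^{-1}(v)\bigr)\,d\tilde{\xi}(v)<0\). The differences are only in bookkeeping: the paper spreads the witness measure uniformly over the full (assumed positive-measure) violation set and then needs a separate a.e.-to-everywhere argument to recover \(F_X(z)\le F_Y(z)\), whereas your localization at a single violation level via left-continuity of \(F_Y^{-1}\) (plus your explicit check that \(\xi\mapsto\tilde{\xi}\) is a bijection on distortion functions, which the paper leaves implicit) yields the pointwise quantile inequality directly.
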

The proof is straightforward by observing that \(M_{\xi}(X) - M_{\xi}(Y) = \int_0^1 (F_X^{-1}(v) - F_Y^{-1}(v)) d\tilde{\xi}(v)\) and the fact that \(\tilde{\xi}(v) \geq 0\). We present a short proof in Appendix~\ref{proof:prop_4.6}. This implies that if we solve 
\(
\max_{\pi_\varphi} \int_0^1 \left( F_{Z^{\pi_{\varphi}}}^{-1}(v) - F_{E}^{-1}(v) \right) d\tilde{\xi}(v) + \mathcal{H}(\pi_{\varphi})
\) for every distortion function, we obtain the solution to Eq.~\ref{eq:policy-learning}. However, since optimizing over all utility conditions is intractable, our proposed objective serves as an approximation using a specific DRM. Nonetheless, under the conditions of the proposition, this surrogate objective can theoretically achieve the same optimality as Eq.~\ref{eq:policy-learning}.

% Discuss the practical implementation of the proposed algorithm
\subsection{Practical Algorithm}
\begin{center}
\begin{algorithm}[h]
% \small
\caption{A DistIRL method with FSD objective}
\label{alg:DistIRL}
\KwIn{Expert data $\mathcal{D} = \{(s_t^E, a_t^E)\}$, prior $p_0(r)$, risk measure $\xi$, step sizes $\eta^\theta, \eta^\varphi, \eta^\phi$}
\KwOut{Reward distribution $q_\phi(r | s, a)$; policy $\pi_\varphi(a | s)$}
\BlankLine

Initialize parameters of reward network $\phi$, policy $\varphi$, and critic $\theta$\;

\For{$k = 1$ \KwTo $K$}{

    Sample a mini-batch $\{(s_t^E, a_t^E)\}$ from $\mathcal{D}$\;

    \ForEach{$(s_t^E, a_t^E)$ in mini-batch}{
        For each $s_t^E$, sample $a_t \sim \pi_\varphi(\cdot| s_t^E), r_t \sim q_\phi(\cdot|s_t^E, a_t), r_t^E \sim q_\phi(\cdot|s_t^E, a_t^E)$\;
    }

    Compute return samples $Z^{\pi_k},Z^E$\;
    Critic update via quantile regression (Eq.~\ref{eq:huber-loss}):
    \(
    \theta_{k+1} \leftarrow \theta_k - \eta^\theta \nabla\mathcal{L}_{QR}(\theta_k)
    \)\;
    
    Policy update with distortion risk measure (Eq.~\ref{eq:policy-update}):
    \(
    \varphi_{k+1} \leftarrow \varphi_k + \eta ^\varphi\nabla\mathcal{L}_{\pi}(\varphi_k)
    \)\;
    
    Reward distribution update via FSD loss (Eq.~\ref{eq:reward-update}):
    \(
    \phi_{k+1} \leftarrow \phi_k - \eta^\phi \nabla\mathcal{L}_r(\phi_k).
    \)
}

\end{algorithm}
\end{center}

To enable tractable and expressive modeling of reward uncertainty, we parameterize the reward distribution $q_\phi(r|s,a)$, for example, using Azzalini's skew-normal distribution \citep{azzalini1996multivariate}: 
\(q_\phi(r|s,a) = \mathcal{SN}(\mu_\phi(s,a), \sigma^2_\phi(s,a); \alpha_\phi(s,a)),\)
where the mean $\mu_\phi(s,a)$, standard deviation $\sigma_\phi(s,a)$, and skew parameter \(\alpha_\phi(s,a)\) are outputs of a neural network with parameters $\phi$. This choice allows for efficient sampling and computing regularization when using a standard normal prior. During training, for each state-action pair, we sample rewards $r_t \sim q_\phi(\cdot|s_t, a_t)$ to construct return samples for both the expert and the current policy.

Note that the choice of prior depends heavily on the task domain and the type of variability we expect in the reward signal. This prior sensitivity is shared with Bayesian IRL methods: an informative prior can help when domain knowledge is reliable, while a poor prior can bias the learned reward distribution. For example, skew-normal distributions can capture asymmetric reward uncertainty in tasks with systematic biases (e.g., contact-rich manipulation), whereas heavy-tailed priors may be more suitable when outliers or rare but significant events dominate the return structure. In contrast, the broader statistical learning community often defaults to Gaussian priors, primarily because of their analytical tractability, conjugacy with many likelihood models, and well-understood concentration properties. That said, DistIRL does not rely on a fixed distributional assumption. Any parameterized distribution ${p_\theta}$ whose log-density or quantile function is differentiable in $\theta$ is compatible with our framework, since the algorithm only requires gradients.

To estimate the DRM $M_{\xi}(Z^{\pi})$ for the policy, we follow an offline approach: we use states $s_t$ drawn from the expert demonstration dataset, but sample actions $a_t^\pi \sim \pi_\theta(\cdot|s_t)$ from the current policy, and a reward $r_t \sim q_\phi(\cdot|s_t, a_t^\pi)$. Then we compute the return $Z^{\pi}$ by taking the sum. 
For policy update, we first learn the critic by Off-policy Evaluation (OPE) \citep{sutton1998reinforcement} on \((s_t, a_t, r_t, s_{t+1}, a_{t+1}^\pi)\) where we use Quantile Regression with the Quantile Huber loss \(\mathcal{L}_{QR}\) as in Eq.~\ref{eq:huber-loss}. We then update the risk-aware policy by solving $\min_{\pi} \mathrm{KL}\left( \pi(\cdot | s) \,\big\|\, \frac{1}{Z} \exp\left\{ M_{\xi}(Z^{\pi}(\cdot | s)) \right\} \right)$, which corresponds to the KKT solution to Eq.~\ref{eq:policy-learning}, as originally introduced by~\citet{ziebart2008maximum}. We summarize the full procedure in Alg.~\ref{alg:DistIRL}. 
% Notebly, the algorithm scheme follow the Two Timescale Stochastic Approximation (TTSA) scheme used in prior works (e.g. as in \citet{zeng2022maximum} and \citet{wu2023inverse}) to enable scalable and efficient learning. Regrettably, a rigorous convergence analysis is beyond the scope of this paper. But due to the contraction property of the distributional Bellman operator \citep{bellemare2017distributional} with mild regularity conditions of Eq.~\ref{eq:reward-update}, bounding the iteration of policy optimization may be achievable.  

\section{Theoretical Results}
In this section, we provide a theoretical analysis of the algorithm proposed above. The analysis fixes a distortion function \(\xi\) throughout the run and studies the idealized update in which the corresponding DRM policy-improvement step is solved exactly. For a reward parameter \(\phi\) and policy \(\pi\), let \(Q^\xi_{\phi,\pi}\) denote the risk-sensitive critic induced by the nested DRM Bellman operator. Let \(\pi^\star_\phi\) denote the DRM-optimal policy for the reward distribution \(q_\phi\), and define the critic tracking error
\[
E_k = \big\|Q^{\xi}_{\phi_k,\pi_k} - Q^{\xi}_{\phi_k,\pi^\star_{\phi_k}}\big\|_\infty .
\]
The stepsize exponent \(\sigma\in(0,1)\) controls the reward-update stepsize schedule. First, we introduce several regularity assumptions, the necessity of which is detailed in the appendix~\ref{sec:appendix_convergence}. 

\begin{restatable}{assumption}{boundedapp}
\label{ass:bounded-app}
There exists $R_{\max}<\infty$ such that
\begin{equation}
\label{eq:bounded-reward}
|q_\phi(s,a)| \;\le\; R_{\max}
\quad\text{almost surely for all }(s,a,\phi).
\end{equation}
\end{restatable}

\begin{restatable}{assumption}{lawlipapp}
\label{ass:lawlip-app}
For every $(s,a)$ and all $\phi_1,\phi_2\in\mathbb R^d$, the reward laws satisfy
\begin{equation}
\label{eq:reward-wasserstein}
W_\infty\!\big(q_{\phi_1}(\cdot| s,a),\,q_{\phi_2}(\cdot| s,a)\big)
\;\le\; L_R \,\|\phi_1 - \phi_2\|,
\end{equation}
where $W_\infty$ denotes the Wasserstein infinity distance. Equivalently, one can couple $q_{\phi_1}(s,a)$ and $q_{\phi_2}(s,a)$ such that
$
|q_{\phi_1}(s,a)-q_{\phi_2}(s,a)|
\;\le\; L_R\,\|\phi_1 - \phi_2\|
\quad\text{almost surely}.
$
\end{restatable}

We use the following assumption on a given DRM. Standard normalized DRMs satisfy these properties. 
\begin{restatable}{assumption}{drmapp}
\label{ass:drm-app}
For each state-action pair $s\in\mathcal S, a\in \mathcal{A}$, the one-step distortion risk measure
$M_\xi(\cdot| s,a)$ is
\begin{enumerate}
    \item \emph{monotone}: $X\le Y$ a.s.\ implies $M_\xi(X| s,a)\le M_\xi(Y| s,a)$;
    \item \emph{translation-equivariant}: $M_\xi(X+c| s,a) = M_\xi(X| s,a)+c$ for all $c\in\mathbb R$;
    \item \emph{$1$-Lipschitz in $\|\cdot\|_\infty$}: for all bounded random variables $X,Y$,
    $\big|M_\xi(X| s,a) - M_\xi(Y| s,a)\big|
    \;\le\; \|X-Y\|_\infty.$
\end{enumerate}
\end{restatable}
First, we show that the critic under a given DRM converges in the average sense:

\begin{restatable}{theorem}{qconvnew}
\label{thm:q-conv-new}
Assume Assumptions~\ref{ass:bounded-app}-\ref{ass:drm-app} hold. 
Let $E_k = \big\|Q^{\xi}_{\phi_k,\pi_k} - Q^{\xi}_{\phi_k,\pi^\star_{\phi_k}}\big\|_\infty$.
Assume the reward update
satisfies Assumption~\ref{ass:smooth-grad}, with stepsizes
$\eta_k = \eta= \eta_0 K^{-\sigma}$, $\eta_0>0$, and $\sigma\in(0,1)$.
Then running the DistIRL algorithm $K$ steps, we have $\textstyle{\frac{1}{K}\sum_{k=1}^K E_k
\;=\;
\mathcal O(K^{-1})
\;+\;
\mathcal O(K^{-\sigma}).}$

\end{restatable}
This gives a corresponding policy bound:

\begin{restatable}{theorem}{policyconvnew}
\label{thm:policy-conv-new}
For each $k$, define the learned and DRM-optimal policies induced by the
current $Q$-functions:
\begin{equation*}
\label{eq:pi-plus-def}
\begin{aligned}
\pi_k(\cdot| s)
\!\propto \!\exp \big(Q^{\xi}_{\phi_k,\pi_k}\!\!(s,\cdot)\big),
\pi^{\star}_{\phi_k}(\cdot| s)
\!\propto\! \exp  \big(Q^{\xi}_{\phi_k,\pi^\star_{\phi_k}}\!\!(s,\cdot)\big).
\end{aligned}
\end{equation*}
Then, running the DistIRL algorithm $K$ steps, we have
% \begin{equation}
% \label{eq:policy-gap-pointwise}
% \big\|\log \pi_k^{+} - \log \pi^{\star+}_{\phi_k}\big\|_\infty
% \;\le\; 2 E_k,
% \end{equation}
% where $E_k$ is defined in~\ref{eq:Ek-def}. In particular, for any $K$,
\begin{equation*}
\label{eq:policy-gap-avg}
\textstyle\frac{1}{K}\sum_{k=1}^K
\|\log \pi_k - \log \pi^{\star}_{\phi_k}\|_\infty
% \;\le\;
% 2 \cdot \frac{1}{K}\sum_{k=1}^K E_k
= \mathcal O(K^{-1}) + \mathcal O(K^{-\sigma}).
\end{equation*}
\end{restatable}
Finally, we obtain a rate of convergence towards a first-order stationary point:
\begin{restatable}{theorem}{firstorder}
\label{thm:first-order}
Suppose Assumptions~\ref{ass:bounded-app},
\ref{ass:lawlip-app}, \ref{ass:smooth-grad},
and~\ref{ass:grad-bias} hold.
Let $\eta_k = \eta_0 k^{-\sigma}$ with $\eta_0>0$ and
$\sigma\in(0,1)$, and assume $\mathcal{L}_r$ is bounded below on $\Phi$.
Then there exists $C>0$ such that
\begin{equation*}
    \frac{1}{K}\!\!\sum_{k=0}^{K-1}
    \E\|\nabla\!\mathcal{L}_r(\phi_k)\|^2
   \! =\!
    \mathcal{O}\!\!\left( K^{-1} \right) + \mathcal{O}\!\!\left(K^{-\sigma} \right) + \mathcal{O}\!\!\left(K^{-1+\sigma} \right).
    \label{eq:avg-grad-bound-final}
\end{equation*}
% and in particular
% \begin{equation}
%     \lim_{K\to\infty}
%     \frac{1}{K}\sum_{k=0}^{K-1}
%     \E\bigl[\|\nabla\mathcal{L}_r(\phi_k)\|^2\bigr] = 0.
% \end{equation}
\end{restatable}
In particular, picking $\sigma=1/2$, we obtain a $\mathcal{O}(\varepsilon^{-2})$ iteration bound for the algorithm to reach an $\varepsilon$-stationary point in averaged squared gradient norm. This rate characterizes the alternating DistIRL objective under our assumptions.

\section{Experiment}
% Gridworld: good mean & variance recovery and show mean dominance is worse
\subsection{Gridworld}

We begin with a \(5 \times 5\) gridworld environment where the agent is trained to navigate from the starting state \((2,0)\) (left-center) to rewarding goal locations. Two high-reward states are placed at \((0,4)\) (top-right) and \((4,4)\) (bottom-right), with the top-right reward modeled as a stochastic outcome drawn from \(\mathcal{N}(1,1)\). The first column of Fig.~\ref{fig:gridworld} illustrates the ground-truth reward mean and variance.

\begin{figure}[h!]
\small
    \centering
    \includegraphics[width=\linewidth]{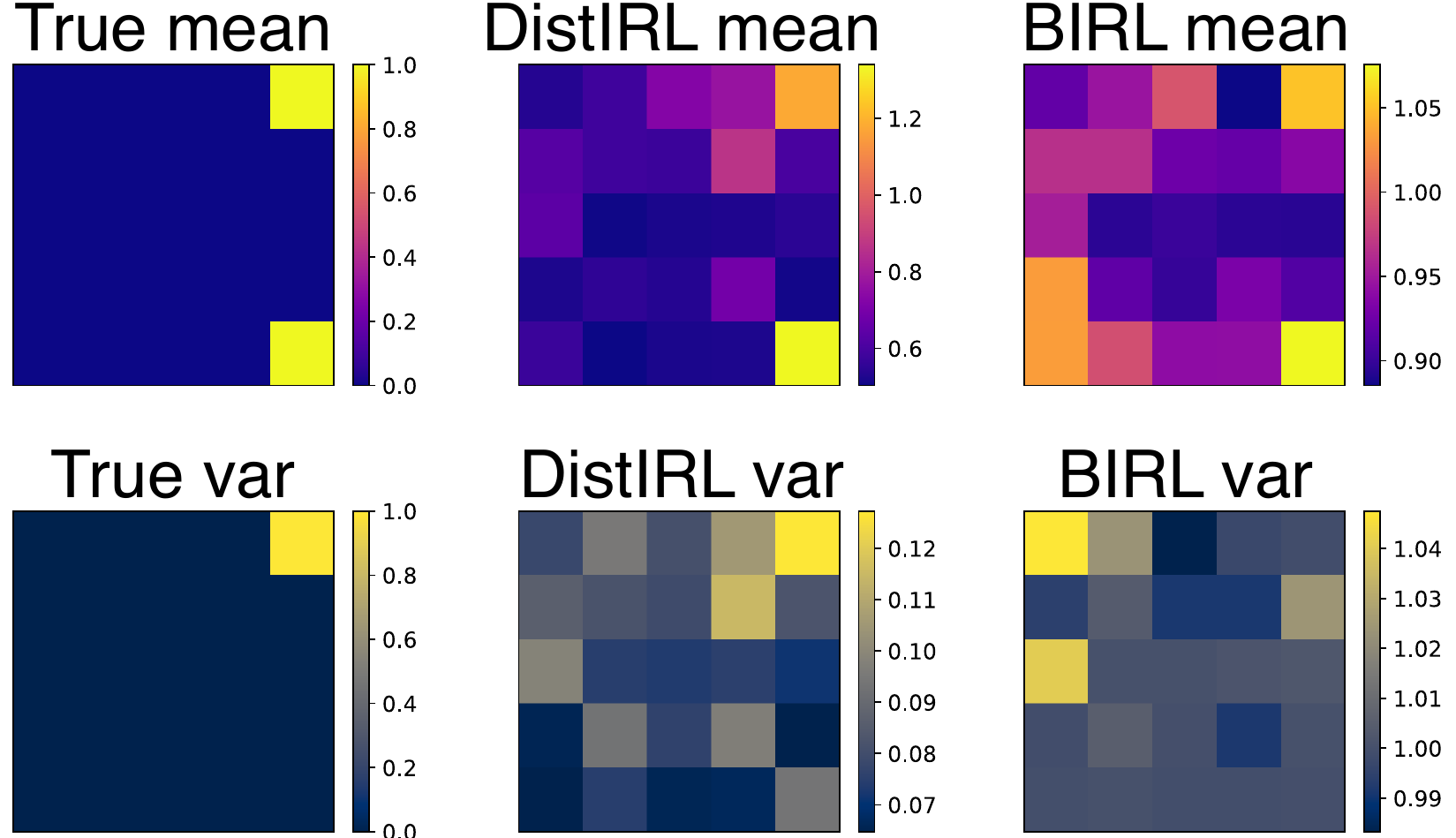}
    %\vspace{-0.24in}
    \caption{Inferring reward mean and variance in the gridworld example with $10$ demonstrations.}
    \label{fig:gridworld}
\end{figure}

This setup mimics an animal exploring an arena with two reward ports. In such compact environments, animals often display risk-averse behavior, i.e., avoiding locations where rewards have previously failed to appear \citep{mobbs2018foraging, daw2006cortical}. To model this, we collect $10$ trajectories from a risk-averse agent trained under stochastic rewards. In $9$ out of $10$ episodes, the agent chooses the more reliable bottom-right goal.
We then apply our DistIRL method to recover the full reward distribution. As shown in Fig.~\ref{fig:gridworld}, using a symmetric Gaussian reward estimator combined with risk-averse policy learning, our approach not only identifies both high-reward states but also captures the variance at the top-right goal. This highlights the model's ability to infer higher-order moments of the reward from expert demos.

As a baseline, we evaluate Bayesian IRL (BIRL) \citep{chan2021scalable, mandyam2023kernel, bajgar2024walking}. 
% (FSD) criterion that we apply to recover both the reward and the policy
BIRL is a widely used framework that assumes a reward distribution but learns it by matching only the mean, without capturing the full distributional structure. We select BIRL because it is the method most comparable to ours in its ability to recover a reward distribution. BIRL reasonably recovers the mean reward but produces spurious high estimates in the lower-left corner. Furthermore, it fails to capture reward variance, emphasizing the need to enforce distance over the full distribution. 
Simply specifying a reward distribution, without integrating distribution-aware learning, fails to capture the true variance of the rewards.

% Dopamine Level data: show learning reward shapes and closer to true reward distribution
\subsection{Mouse Spontaneous Behavior}

We apply our framework to a neuroscience dataset in which mice freely explore an arena without explicit rewards \citep{markowitz2023spontaneous}. Behavior was recorded using a depth camera, and the raw trajectories were converted into sequences of discrete syllables (e.g., grooming, sniffing). We model these trajectories with an MDP, treating each syllable as a state and the next syllable as the action, yielding ten states and ten actions. In total, we analyzed 159 such state-action sequences. The dataset also includes a time-aligned one-dimensional trace of dopamine fluctuations from the dorsolateral striatum. Prior work \citep{markowitz2023spontaneous} showed that using dopamine as a reward enabled a simulated RL agent to reproduce observed transitions, suggesting IRL should recover a reward pattern resembling dopamine. Since dopamine varies even within the same state-action pair, the prior study used only its mean for simplicity. Here, we compare rewards learned under deterministic vs. distributional assumptions to assess how well they capture both the mean and the full distribution of dopamine signals.

\begin{figure*}[t]
    \centering
    \includegraphics[width=\textwidth]{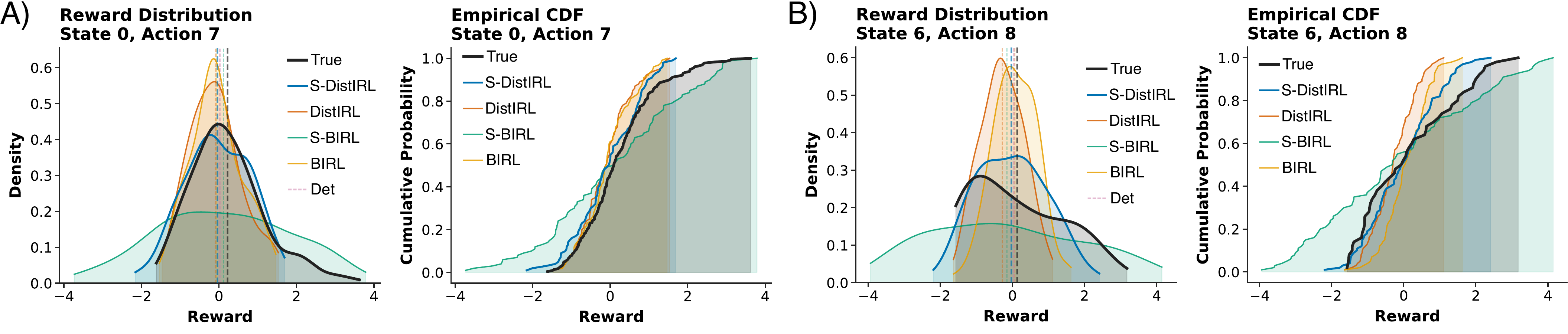}
    \caption{Learned reward distribution versus recorded dopamine signals and their empirical CDFs.}
    \label{fig:da_dist}
\end{figure*}

We use both Azzalini's skew-normal distribution (denoted ``S-'') and the symmetric Gaussian as reward models for both DistIRL and BIRL. Fig.~\ref{fig:da_dist}A) and B) show two example state-action pairs, illustrating the true dopamine fluctuation distribution alongside the estimated reward distributions from four methods. 
The assumption of a parameterized reward distribution is motivated by prior findings in computational neuroscience: dopamine-related reward signals in rodents are well known to exhibit asymmetric, left-skewed variability. For this reason, we chose a skew-normal family, which captures exactly this type of asymmetric structure while remaining interpretable. 
For each case, we display both the probability density function and the CDF, along with the corresponding means. Deterministic rewards (Det) are shown as pink dashed lines in the density plots. Among all methods, S-DistIRL most accurately recovers the shape of the dopamine distribution, which is often right-skewed and multimodal. Its estimated mean also closely matches both the true mean and the deterministic estimate.

\begin{figure}
    \includegraphics[width=1.0\linewidth]{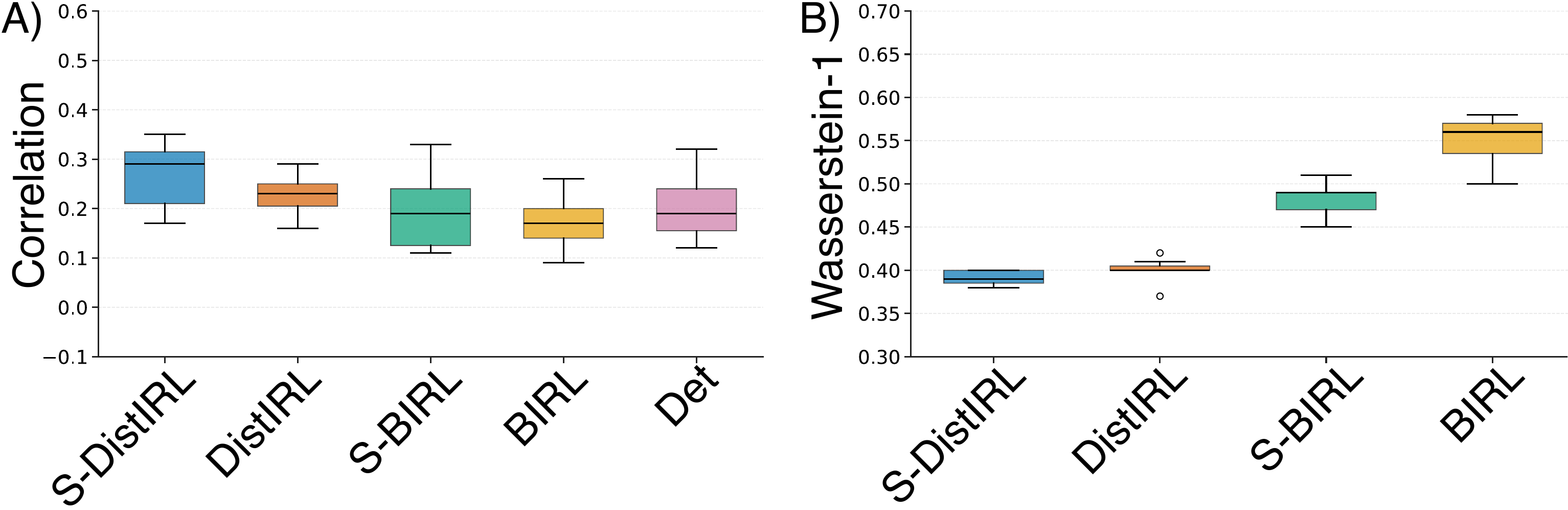}
    \caption{Left: Pearson correlation of the reward mean and dopamine level. Right: W-1 loss between learned distribution and dopamine level.}
    \label{fig:da_skew_compare}
\end{figure}

We also quantify the similarity between estimated rewards and actual dopamine distributions. In Fig.~\ref{fig:da_skew_compare}A), we report the correlation between the mean of dopamine fluctuations and the mean of the estimated reward across all mice and trajectories. Deterministic reward models yield moderate correlation, while DistIRL improves upon this, with S-DistIRL achieving the highest correlation overall. This finding indicates that incorporating full reward distributions, using suitable skewed distributional models, is essential for IRL to capture biologically meaningful reward signals. Fig.~\ref{fig:da_skew_compare}B) shows that, compared to BIRL, S-DistIRL also achieves a lower Wasserstein-1 distance between the estimated reward distribution and the actual dopamine distribution, indicating better alignment of the shape. Taken together, both qualitative examples and quantitative metrics support that modeling skewed reward distributions significantly enhances the ability to track dopamine fluctuations.

This result suggests that reward structure can be inferred directly from behavior data under appropriate modeling assumptions. While it is known that dopamine neurons encode reward-related signals \citep{schultz1997neural, markowitz2023spontaneous}, our experiment shows not only a nontrivial correlation between the inferred and measured mean rewards (with a correlation around 0.3), but also that the full reward distribution recovered from behavior reasonably resembles the distribution of dopamine fluctuations. This suggests that detailed features of neuromodulatory signals, such as the variability in dopamine release, can be decoded from behavior alone, highlighting the potential of inverse modeling to uncover internal motivational states and their neural substrates.
% Offline MuJoCo benchmarks: show superior policy recovery under the stochastic environment against deterministic infernece methods
% %\vspace{-0.2\baselineskip}
\subsection{MuJoCo Benchmarks}

\noindent\textbf{Risk-sensitive D4RL.} 
In earlier experiments, we applied DistIRL to discrete state-action MDPs and compared it with BIRL. Here we extend the study to continuous MDPs to demonstrate DistIRL's scalability and generalizability. We evaluate our method on Risk-sensitive D4RL benchmarks, following the reward formulations introduced in recent robustness studies \citep{urpi2021risk}. Specifically, the reward functions incorporate stochastic penalties triggered by safety-related conditions:\\
(1) \textbf{Half-Cheetah:} 
\(
R_t(s,a) = \bar{r}_t(s,a) - 70 \mathbb{I}_{\nu > \bar{\nu}} \cdot \mathcal{B}_{0.1},
\)
where $\bar{r}_t(s,a)$ is the environment reward, $\nu$ is the forward velocity, and $\bar{\nu}$ is a velocity threshold ($\bar{\nu}=4$ for the medium variant and $\bar{\nu}=10$ for the easy variant). This penalty models rare but catastrophic robot failures at high speed.\\
(2) \textbf{Walker2D/Hopper:} 
\(
R_t(s,a) = \bar{r}_t(s,a) - p \mathbb{I}_{|\theta| > \bar{\theta}} \cdot \mathcal{B}_{0.1},
\)
where $\bar{r}_t(s,a)$ is the environment reward, $\theta$ is the pitch angle, $\bar{\theta}$ is a task-dependent threshold ($0.5$ for Walker2D-M/E and $0.1$ for Hopper-M/E), and $p$ is the penalty magnitude ($30$ for Walker2D and $50$ for Hopper). 
% When $|\theta| > 2\bar{\theta}$ the robot falls, the episode terminates, and rewards are no longer collected. 

We train expert agents on these stochastic reward formulations using Risk-averse Distributional SAC, a variant of DSAC \citep{duan2021distributional} with a CVaR objective, and collect $10$ demonstration trajectories. We then evaluate DistIRL against several state-of-the-art baselines. Results are averaged over 5 random seeds. We use a standard normal as the prior due to its general applicability when the underlying true reward distribution is unknown.

Table~\ref{table:stochastic_mujoco} shows that our method consistently outperforms other \textbf{offline} IRL baselines under stochastic reward settings. For reward parameterization, we use a Gaussian distribution (denoted as \textbf{DistIRL}) and a quantile-function parameterization (denoted as \textbf{DistIRL-qtr}, short for QuanTile Reward). Popular online methods such as GAIL \citep{ho2016generative} are not directly applicable in this offline setting. \textbf{Offline ML-IRL} \citep{zeng2023demonstrations} is a model-based MaxEntIRL method that relies on a separately trained transition model using additional non-expert data. Its poor performance here is expected: the transition model was pretrained under risk-neutral rewards and does not align with the new expert data generated under risk-sensitive objectives, leading to severe distribution mismatch. \textbf{ValueDICE} \citep{kostrikov2019imitation}, a model-free offline MaxEntIRL baseline, also underperforms since it optimizes with respect to expected risk-neutral returns, while our experts follow risk-averse behavior. \textbf{Behavior Cloning (BC)} achieves moderately strong results, as it simply mimics the demonstrated actions without explicitly optimizing for either risk-neutral or risk-sensitive objectives. However, its performance is limited as the model overfits.

\begin{table}[ht]
\footnotesize
\setlength{\tabcolsep}{4pt}
\centering
\caption{D4RL performance averaged over 5 seeds.}
\vspace{-0.05in}
\label{table:stochastic_mujoco}
\begin{tabular}{lccc}
\toprule
Method & HalfCheetah & Hopper & Walker2d \\ \midrule
DistIRL (ours) 
  & $\mathbf{3469 \pm 59}$ 
  & $\mathbf{886 \pm 1}$ 
  & $\mathbf{1526 \pm 148}$ \\
DistIRL-qrt (ours) 
  & $3294 \pm 172$ 
  & $747 \pm 79$ 
  & $1211 \pm 182$ \\
Offline ML-IRL 
  & $826 \pm 231$ 
  & $192 \pm 56$ 
  & $240 \pm 50$ \\
ValueDICE 
  & $1259 \pm 78$ 
  & $260 \pm 10$ 
  & $798 \pm 311$ \\
BC 
  & $2828 \pm 281$ 
  & $346 \pm 1$ 
  & $1321 \pm 26$ \\ \hline
Expert 
  & $3540 \pm 44$ 
  & $892 \pm 3$ 
  & $1478 \pm 200$ \\
\bottomrule
\end{tabular}
\vspace{-0.1in}
\end{table}

\begin{figure}
\small
    \centering
    \includegraphics[width=0.8\linewidth]{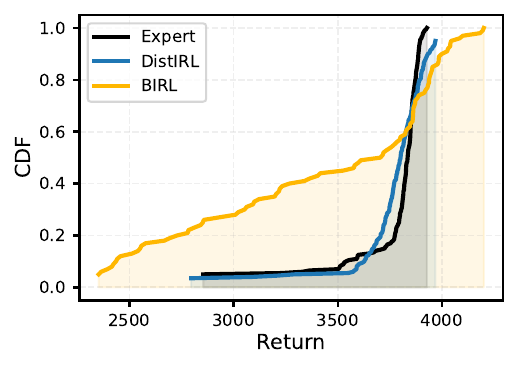}
    \vspace{-0.1in}
    \caption{Return distributions comparison in HalfCheetah.}
    \label{fig:halfcheetah_cdfs}
    \vspace{-0.2in}
\end{figure}

To further validate the fidelity of our inferred return distributions from DistIRL and compare with the BIRL framework that only matches the mean, we collect $200$ trajectories, sample the learned return distribution for each learned policy, and plot against the expert's return distribution in Fig.~\ref{fig:halfcheetah_cdfs}. This shows that DistIRL's reward and policy model better align with the expert. We also report a Pearson correlation coefficient of $0.92$ between the mean estimated by DistIRL and the mean of the true return. This indicates strong agreement and demonstrates that our inferred reward is an accurate proxy for the true reward model. A further examination of the return distribution and its higher-order moments can be found in Appendix~\ref{appendix:match_return_distribution}.

\vspace{-0.1in}
Additionally, the competitive results of quantile-based reward parameterization open the opportunity to use a broad range of parametric families, including diffusion models, and we leave this direction as a future extension.

% \vspace{-0.1in}
\noindent\textbf{Risk-neutral D4RL.} 
We also test our algorithm in conventional deterministic reward settings using D4RL's \texttt{medium-expert} trajectories \citep{fu2020d4rl}. Table~\ref{table:det_mujoco} shows that our method achieves competitive or superior performance even without tailoring to deterministic assumptions, underscoring the generality of DistIRL. We want to emphasize that Offline ML-IRL requires additional data\footnote{For HalfCheetah, with the same amount of data as Offline ML-IRL, DistIRL can reach $11239 \pm 539$.}.
% %\vspace{-0.5\baselineskip}

\textbf{Ablation studies.} We evaluate the contribution of different design choices by ablating our model under the HalfCheetah setting with right-skewed normal (\(\mathcal{SN}_\eta, \eta>0\)) stochastic rewards and risk-averse expert policy, indicating the expert prefers conservative actions that yield more consistent rewards. Variants include:  
\textbf{Dis/Det}: Distributional or Deterministic rewards;  
\textbf{QR/TD}: Quantile Regression or TD-based critic;  
\textbf{FSD/Mean}: FSD loss or Mean matching.
As shown in Table~\ref{table:ablation-study}, which scales the performance between worst and best, using distributional rewards with FSD loss significantly outperforms mean-matching alternatives. Additionally, deterministic TD-learning with mean-matching (\textbf{Det-TD-Mean}) underperforms in learning risk-averse policies due to a lack of distributional supervision. This confirms the effectiveness of FSD-based reward learning and risk-sensitive policy optimization. Note that the BIRL framework aligns with our \textbf{Dis-TD-Mean} configuration; RIZE \citep{karimirize} aligns with \textbf{Det-Qt-Mean}, which performs the worst; \textbf{Det-TD-Mean} aligns with ValueDice but with an explicit reward estimation. Thus, in this ablation study, we treat them as a specific setting within DistIRL when benchmarking against other approaches.

\vspace{-0.1in}
Additionally, we conduct ablation studies on the choice of DRM in Appendix~\ref{appendix:drm_choices}, showing that DistIRL is not sensitive to a specific DRM as long as the chosen risk measure does not deviate too far from the underlying risk preference of the expert data. We also conduct experiments on the number of trajectories for the risk-sensitive D4RL dataset in Appendix~\ref{appendix:number_of_trajs}, which show that DistIRL is sufficiently robust in a low-data regime, indicating that our approach is computationally attractive.  
% % %\vspace{-0.2in}
% \begin{table}[ht]
% \tiny
%   \caption{Performance on deterministic reward settings (D4RL).}
%   \label{table:det-mujoco}
%   \centering
%   %\vspace{-0.1in}
% \begin{tabular}{c|cccc|c}
%     \toprule
%     Environment & DistIRL (Ours) & Offline ML-IRL & ValueDICE & BC & Expert \\ \midrule
%     HalfCheetah & $7779 \pm 228$ & $\mathbf{11231 \pm 585}$ & $4935 \pm 2836$ & $623 \pm 56$   & $12175 \pm 91$ \\
%     Hopper      & $\mathbf{3411 \pm 42}$ & $3347 \pm 238$  & $3073 \pm 539$  & $3236 \pm 46$  & $3512 \pm 22$ \\
%     Walker2d    & $\mathbf{4570 \pm 305}$ & $4201 \pm 638$  & $3191 \pm 1888$ & $2822 \pm 979$ & $5384 \pm 52$ \\
%     \bottomrule
%   \end{tabular}
% \end{table}
% %\vspace{-0.7\baselineskip}

\vspace{-0.05in}
\begin{table}[ht]
\centering
\small
\setlength{\tabcolsep}{4pt}  % default is ~6pt
\caption{Performance on deterministic reward settings (D4RL).}
\vspace{-0.05in}
\label{table:det_mujoco}
\begin{tabular}{lccc}
\toprule
Method & HalfCheetah & Hopper & Walker2d \\ \midrule
DistIRL (Ours)
  & $7779 \pm 228$
  & $\mathbf{3411 \pm 42}$
  & $\mathbf{4570 \pm 305}$ \\
Off. ML-IRL
  & $\mathbf{11231 \pm 585}$
  & $3347 \pm 238$
  & $4201 \pm 638$ \\
ValueDICE
  & $4935 \pm 2836$
  & $3073 \pm 539$
  & $3191 \pm 1888$ \\
BC
  & $623 \pm 56$
  & $3236 \pm 46$
  & $2822 \pm 979$ \\ \hline
Expert
  & $12175 \pm 91$
  & $3512 \pm 22$
  & $5384 \pm 52$ \\
\bottomrule
\end{tabular}
\end{table}
\vspace{-0.9\baselineskip}

% %\vspace{-\baselineskip}
% % Ablation studies: bc_warm start; risk measures; 
% \begin{table}[ht]
% \tiny
%   \caption{Ablation study on model setting. Performance scaled for clarity.}
%   \label{table:ablation-study}
%   \centering
%     %\vspace{-0.1in}
% \begin{tabular}{@{}cccccc@{}}
%     \toprule
%     DistIRL (Ours) & Dis-Qt-Mean & Det-Qt-Mean & Dis-TD-FSD & Dis-TD-Mean & Det-TD-Mean \\
%     \midrule
%     $\mathbf{1.0 \pm 0.02}$ & $0.22 \pm 0.02$ & $0.00 \pm 0.01$ & $0.67 \pm 0.31$ & $0.33 \pm 0.01$ & $0.22 \pm 0.00$ \\
%     \bottomrule
%   \end{tabular}
% \end{table}
% %\vspace{-0.7\baselineskip}
%\vspace{-\baselineskip}
% Ablation studies: bc_warm start; risk measures;
% \vspace{-0.05\baselineskip}
\begin{table}[ht]
\small
  \caption{Ablation study on model setting. Performance scaled.}
\vspace{-0.05in}
  \label{table:ablation-study}
  \centering
  %\vspace{-0.1in}
  \begin{tabular}{@{}ccc@{}}
    \toprule
    Dis-Qt-Mean & Det-Qt-Mean & DistIRL (Ours) \\ \midrule
    $0.22 \pm 0.02$ & $0.00 \pm 0.01$ & $\mathbf{1.00 \pm 0.02}$ \\ \midrule
    Dis-TD-FSD & Dis-TD-Mean & Det-TD-Mean \\ \midrule
    $0.67 \pm 0.31$ & $0.33 \pm 0.01$ & $0.22 \pm 0.00$ \\
    \bottomrule
  \end{tabular}
\end{table}
%\vspace{-0.7\baselineskip}

% %\vspace{-\baselineskip}
% \section{Limitation and Broader Impact}
% %\vspace{-\baselineskip}
% Our ideal policy objective (Eq.~\ref{eq:policy-learning}) enforces first-order stochastic dominance (FSD) but the indicator-based formulation is non-differentiable, making exact FSD-constrained optimization intractable in practice.
% Additionally, we treat each state-action reward distribution $q_\phi(r| s,a)$ as independent. This ignores potential correlations across different $(s,a)$ pairs---such as spatial or temporal dependencies that naturally arise in many tasks. Extending DistIRL to capture joint reward distributions remains an important direction for future work.
% IRL enables powerful tools for understanding behavior, with positive applications in neuroscience, animal modeling, and AI alignment. However, it also raises ethical concerns. IRL could be misused in military settings to model or mimic adversarial behavior, or in surveillance contexts to infer personal goals without consent, posing risks to privacy and autonomy. These concerns highlight the need for careful oversight and responsible deployment.

\vspace{-0.2in}
\section{Conclusion}
\vspace{-0.1in}
We introduce a distributional framework for inverse reinforcement learning that jointly models reward uncertainty and return distributions. Our method enables risk-aware policy learning and accurate inference of high-order structure in demonstrations. We validate the framework on stochastic control tasks, deterministic settings, and real neural datasets, demonstrating state-of-the-art performance and strong generalization across domains. Like other IRL methods, DistIRL does not claim unique recovery of a ground-truth reward from demonstrations alone; it recovers a compatible reward distribution under the chosen prior, variational family, and FSD-based inductive bias. Future work should expand real-world validation, learn or adapt the DRM from demonstrations, study targeted finite-moment relaxations when only selected moments are scientifically important, and model correlations among state-action reward distributions.

%%%%%%%%%%%%%%%%%%%%%%%%%%%%%%%%%%%%%%%%%%%%%%%%%%%%%%%%%%%%
% \newpage

\section*{Impact Statement}

IRL enables powerful tools for understanding behavior, with positive applications in neuroscience, animal modeling, and AI alignment. However, it also raises ethical concerns. IRL could be misused in military settings to model or mimic adversarial behavior, or in surveillance contexts to infer personal goals without consent, posing risks to privacy and autonomy. These concerns highlight the need for careful oversight and responsible deployment.

\bibliography{reference}
\bibliographystyle{icml2026}

%%%%%%%%%%%%%%%%%%%%%%%%%%%%%%%%%%%%%%%%%%%%%%%%%%%%%%%%%%%%%%%%%%%%%%%%%%%%%%%
%%%%%%%%%%%%%%%%%%%%%%%%%%%%%%%%%%%%%%%%%%%%%%%%%%%%%%%%%%%%%%%%%%%%%%%%%%%%%%%
% APPENDIX
%%%%%%%%%%%%%%%%%%%%%%%%%%%%%%%%%%%%%%%%%%%%%%%%%%%%%%%%%%%%%%%%%%%%%%%%%%%%%%%
%%%%%%%%%%%%%%%%%%%%%%%%%%%%%%%%%%%%%%%%%%%%%%%%%%%%%%%%%%%%%%%%%%%%%%%%%%%%%%%
\newpage
\appendix
\onecolumn
\section{Related work comparison}
\label{tab:lit_review}
\begin{table}[h]
    % \vspace{-0.5in} % tighter spacing above
    \small
    \centering
    \caption{Comparison of IRL methods under various settings}
    \begin{tabular}{l|c|c|c|c}
        \toprule
        \textbf{Reference} & 
        \makecell{Model\\reward dist.?} & 
        \makecell{Infer\\risk aware\\policy?} & 
        \makecell{Recover\\reward dist.?} & 
        \makecell{Learn\\return dist.?} \\
        \midrule
        \makecell[l]{\citep{wulfmeier2015maximum, ziebart2008maximum} \\ \citep{garg2021iq,  ni2021f} \\ \citep{zeng2022maximum, zeng2023demonstrations, wei2023bayesian}} 
        & \xmark & \xmark & \xmark & \xmark \\
        \midrule
        \makecell[l]{\citep{ramachandran2007bayesian, choi2011map} \\ \citep {chan2021scalable, lee2022risk}} 
        & \cmark & \xmark & \xmark & \xmark \\
        \midrule
        \citep{karimirize} & \xmark & \xmark & \xmark & \cmark \\
        \midrule
        \citep{singh2018risk, lacotte2019risk} \\ \citep{ cheng2023eliciting} & \xmark & \cmark & \xmark & \xmark \\
        \midrule
        This work & \cmark & \cmark & \cmark & \cmark \\
        \bottomrule
    \end{tabular}
    % \vspace{-1em} % tighter spacing below
\end{table}

In Table~\ref{tab:lit_review}, we compare DistIRL with existing IRL methods along four key dimensions. The first column, \textit{Model reward distribution}, asks whether a method explicitly represents the reward as a random variable rather than as a fixed deterministic function. For example, Bayesian IRL methods place a prior over reward parameters, thereby modeling uncertainty, but they do not recover the actual shape of the underlying distribution. This is distinct from \textit{Recover reward distribution}, which requires learning the full distribution of rewards themselves, including higher-order statistics such as variance and skewness, rather than just a posterior over parameters.

The third column, \textit{Infer risk-aware policy}, evaluates whether a method incorporates risk measures into policy inference. Methods in this category optimize beyond expected return, often capturing aversion or preference to variability in outcomes. The final column, \textit{Learn return distribution}, indicates whether a method leverages distributional reinforcement learning (DistRL) techniques to estimate the full distribution of returns, rather than only their expectation. Unlike reward distributions, which describe stochasticity at the immediate reward level, return distributions capture the cumulative effect of randomness from rewards, transitions, and policies over trajectories.

As shown in the table, most prior IRL methods either assume deterministic rewards or restrict themselves to expectation-based inference. In contrast, DistIRL simultaneously models stochastic rewards, learns full reward distributions, integrates distributional return estimation, and supports risk-aware policy learning, thereby unifying these capabilities in a principled way.

% \end{landscape}
% \newpage
{
\section{Extended preliminaries}
The state-value and action-value functions under \(\pi\) are defined as
\[
V^\pi(s) = \E\bigl[Z^\pi | s_t=s\bigr], 
\qquad
Q^\pi(s,a) = \E\bigl[Z^\pi | s_t=s,a_t=a\bigr].
\]
They satisfy the Bellman equations
\[
V^\pi(s) = \E_{a\sim\pi,s'\sim P}\left[r(s,a) + \gamma V^\pi(s')\right],
\quad
Q^\pi(s,a) = \E_{s'\sim P}\bigl[\,r(s,a) + \gamma\,\E_{a'\sim\pi}[Q^\pi(s',a')]\bigr].
\]
We also define the \emph{occupancy measure} of $\pi$ as 
\(
d^\pi(s,a)
=
(1-\gamma)\sum_{t=0}^\infty \gamma^t\,\Pr(s_t = s)\,\pi(a| s),
\)
which satisfies $\sum_{s,a}d^\pi(s,a)=1$ and characterizes the long-run state-action visitation probability.
\vspace{-0.5\baselineskip}
\subsection{Distributional RL and Risk-Sensitive Control}
\vspace{-0.5\baselineskip}

Rather than estimating only \(\E[Z^\pi]\), distributional RL models the entire return distribution
that obeys the {\em distributional Bellman operator} \(\mathcal{T}^\pi\) \citep{bellemare2017distributional}:
\vspace{-0.05in}
\[
Z^\pi(s,a) = \sum_{t=0}^{\infty} \gamma^t\,r(s_{t},a_{t}), 
\]
\vspace{-0.05in}
\[\mathcal{T}^\pi Z(s,a) \;\eqD\; r(s,a) + \gamma\,Z\bigl(s',\pi(s')\bigr),
% \quad
% s'\sim P(\cdot| s,a),
\]
where \(V :\eqD U\) denotes equality of probability laws, indicating random variables \(\{V, U\}\) are distributed according to the same law.
A popular parameterization uses quantile regression: one approximates \(Z^\pi(s,a)\) by \(N\) quantiles \(\boldsymbol{\theta}
(s,a)=[\theta_1(s,a),...,\theta_N(s,a)]:\mathcal{S}\times \mathcal{A} \rightarrow \mathbb{R}^N\) at fractions (quantile levels) \(\tau_i = i/N\), for \(i=1,\dots,N\). In other words, the quantile distribution of \(Z^\pi(s,a)\) is represented by a uniform probability distribution supported on \( \{\theta_i(s,a)\}_{i=1}^N \): \(Z^\pi(s,a) = \frac{1}{N}\sum_{i=1}^N \delta_{\theta_i}(s,a)\) where \(\delta_{\theta_i}\) denotes a Dirac at \(\theta_i\). An example of quantile functions is illustrated in Fig.~\ref{fig:quantile-example}, with \(\theta\) and \(\tau\) indicated.

To update the critic, instead of formulating the TD error, one can minimize the quantile Huber loss \citep{dabney2018distributional} with threshold \(\kappa>0\):
\begin{equation}
\rho_\tau^\kappa(\delta)
=\bigl|\tau - \mathbf{1}\{\delta<0\}\bigr|\;H_\kappa(\delta), H_\kappa(\delta)
=\begin{cases}
\tfrac{1}{2}\,\delta^2, & |\delta|\le\kappa,\\
\kappa\,|\delta| \;-\;\tfrac{1}{2}\,\kappa^2, & |\delta|>\kappa.
\end{cases}
\end{equation}
In distributional RL with \(N\) quantile fractions \(\{\tau_i\}\), the loss for the critic is defined as   
\begin{equation} \label{eq:huber-loss}
\min_{\theta} \mathcal{L}_{\rm QR} (\theta)
= \min_{\theta} \frac{1}{N} \sum_{i=1}^N \sum_{j=1}^N \rho_{\tau_i} \left( \delta_{ij}\right), \delta_{ij}=r + \gamma\,\theta_j(s',a') - \theta_i(s,a).
\end{equation}
Once the return distribution is learned, one can optimize risk measures $M$, e.g.\ Conditional Value at Risk (CVaR) \citep{rockafellar2000optimization}, by maximizing
CVaR\(\bigl(Z^\pi\bigr)\) rather than \(\E[Z^\pi]\), yielding risk-sensitive policies.

\paragraph{Deterministic reward as a special case.}
If $q(\cdot \mid s,a)$ is a point mass at some value $r(s,a)$ for every $(s,a)$,
then we recover the usual deterministic reward setting. 
Thus, our framework strictly generalizes standard IRL.

\paragraph{Why distributions matter.}
If the reward is inherently stochastic (for example, due to noisy human judgments),
matching only the \emph{mean} reward or mean return is not enough to capture the full behavior.
Two policies can have the same expected return but very different risk profiles.
This motivates working with the full return distribution $Z^\pi$, not just its expectation.

\begin{table}[ht]
  \caption{Examples of distortion risk measures.}
  \label{table:spectral_risk}
  \centering
  % %\vspace{-0.1in}
  \begin{tabular}{lll}
    \toprule
    Measure & $\xi(v)$ & Interpretation \\
    \midrule
    CVaR$_\alpha$ & $\min \left(v/\alpha, 1\right)$ & Average of worst $\alpha$-fraction of outcomes \\
    Wang's & $\Phi(\Phi^{-1}(v)+\lambda)$& \(\lambda>0\) risk-aversion, \(\lambda<0\) risk-seeking \\
    \bottomrule
  \end{tabular}
  % %\vspace{-0.6cm}
\end{table}

\subsection{First-Order Stochastic Dominance (FSD)}
\label{app:fsd-intuition}

We now recall first-order stochastic dominance, which provides a way to compare entire distributions,
not just means or variances.

\begin{definition}[First-order stochastic dominance]
Let $X$ and $Y$ be real-valued integrable random variables with cumulative distribution functions
$F_X$ and $F_Y$. We say that $X$ \emph{first-order stochastically dominates} $Y$, written
$X \succeq_{\mathrm{FSD}} Y$, if
\[
  F_X(z) \;\le\; F_Y(z) \quad \text{for all } z \in \mathbb{R}.
\]
\end{definition}

Intuitively, $X \succeq_{\mathrm{FSD}} Y$ means that $X$ tends to take larger values than $Y$:
for every threshold $z$, the probability that $X$ falls below $z$ is no larger than the probability
that $Y$ does. Graphically, the CDF of $X$ lies everywhere \emph{below} the CDF of $Y$.

\paragraph{Connection to utilities and mean dominance.}
A classical result states that $X \succeq_{\mathrm{FSD}} Y$ if and only if
\[
  \mathbb{E}[u(X)] \;\ge\; \mathbb{E}[u(Y)]
\]
for every non-decreasing utility function $u$. In particular, taking $u(x)=x$, we get
\[
  \mathbb{E}[X] \;\ge\; \mathbb{E}[Y],
\]
so FSD implies mean dominance. However, the converse is false: matching or exceeding the mean
does \emph{not} guarantee FSD.

\paragraph{FSD in our context.}
In our framework, we would like the return distribution of the expert policy, $Z^E$, to dominate
that of any learned policy $Z^\pi$, or vice versa depending on the formulation. 
This is a strong requirement and is typically hard to enforce directly during learning.
Our approach therefore designs an objective that \emph{penalizes violations of FSD}
and then turns this objective into an energy function for learning the reward distribution.

\subsection{The FSD Violation Objective as an Energy Function}
\label{app:fsd-energy}

Recall the FSD-based objective in the main text:
\begin{equation}
  \mathcal{L}_{\mathrm{FSD}}(\pi, r)
  \;=\;
  \int_{-\infty}^\infty \bigl[ F_{Z^E}(z) - F_{Z^\pi}(z) \bigr]_+ \, dz,
  \label{eq:app-fsd-loss}
\end{equation}
where $[x]_+ = \max\{x,0\}$ denotes the positive part. 
This quantity measures, in an integrated way, how much $F_{Z^E}$ lies \emph{above} $F_{Z^\pi}$.
If $Z^E \succeq_{\mathrm{FSD}} Z^\pi$, then $F_{Z^E}(z) \le F_{Z^\pi}(z)$ for all $z$, so the integrand
is always zero, and hence $\mathcal{L}_{\mathrm{FSD}}(\pi,r) = 0$.
If FSD is violated, then $\mathcal{L}_{\mathrm{FSD}}(\pi,r)$ becomes positive.

\paragraph{Energy-based interpretation.}
We treat $\mathcal{L}_{\mathrm{FSD}}(\pi,r)$ as an \emph{energy} that scores how well a reward function
$r$ explains the expert demonstrations under policy $\pi$. 
Lower $\mathcal{L}_{\mathrm{FSD}}$ means fewer FSD violations and thus better agreement with the expert.
This motivates defining an energy-based model (EBM)
\begin{equation}
  p(\mathcal{D} \mid r)
  \;\propto\;
  \exp\bigl( - \mathcal{L}_{\mathrm{FSD}}(\pi, r) \bigr),
  \label{eq:app-ebm-likelihood}
\end{equation}
where $\mathcal{D}$ denotes the expert data and the proportionality hides a (typically intractable)
normalizing constant. In words: reward functions that produce small FSD violations are exponentially
more likely under the expert data.

This construction gives us a \emph{likelihood} model for the reward $r$ given the data $\mathcal{D}$,
which we will combine with a prior over $r$ and then approximate via variational inference.

\subsection{Variational Inference and ELBO Derivation}
\label{app:elbo-derivation}

We now derive the variational objective used to learn the reward distribution.
We start from Bayes' rule:
\[
  p(r \mid \mathcal{D})
  \;=\;
  \frac{p(\mathcal{D} \mid r) \, p_0(r)}{p(\mathcal{D})},
\]
where $p_0(r)$ is a prior over reward functions and
\[
  p(\mathcal{D}) 
  = \int p(\mathcal{D} \mid r) \, p_0(r) \, dr
\]
is the evidence (marginal likelihood), which is typically intractable to compute or differentiate.

We introduce a variational family $q_\phi(r \mid s,a)$, parameterized by $\phi$, to approximate the true posterior
$p(r \mid \mathcal{D})$. To measure how close $q_\phi$ is to the true posterior, consider the KL divergence
\begin{align}
  \mathrm{KL}\bigl(q_\phi(r \mid s,a) \,\|\, p(r \mid \mathcal{D})\bigr)
  &= \mathbb{E}_{q_\phi} \Bigl[
       \log \frac{q_\phi(r \mid s,a)}{p(r \mid \mathcal{D})}
     \Bigr].
\end{align}
Plugging in Bayes' rule for $p(r \mid \mathcal{D})$ gives
\begin{align}
  \mathrm{KL}\bigl(q_\phi(r \mid s,a) \,\|\, p(r \mid \mathcal{D})\bigr)
  &= \mathbb{E}_{q_\phi} \Bigl[
       \log \frac{q_\phi(r \mid s,a)}{p(\mathcal{D} \mid r)\,p_0(r) / p(\mathcal{D})}
     \Bigr] \\
  &= \mathbb{E}_{q_\phi} \Bigl[
       \log q_\phi(r \mid s,a)
       - \log p(\mathcal{D} \mid r)
       - \log p_0(r)
       + \log p(\mathcal{D})
     \Bigr].
\end{align}
We can separate out the term that does not depend on $r$:
\begin{align}
  \mathrm{KL}\bigl(q_\phi(r \mid s,a) \,\|\, p(r \mid \mathcal{D})\bigr)
  &= \mathbb{E}_{q_\phi} \bigl[ \log q_\phi(r \mid s,a) - \log p(\mathcal{D} \mid r) - \log p_0(r) \bigr]
     \;+\; \log p(\mathcal{D}).
\end{align}
Rearranging terms yields
\begin{align}
  \log p(\mathcal{D})
  &= \mathbb{E}_{q_\phi} \bigl[ \log p(\mathcal{D} \mid r) + \log p_0(r) - \log q_\phi(r \mid s,a) \bigr]
     \;+\; \mathrm{KL}\bigl(q_\phi(r \mid s,a) \,\|\, p(r \mid \mathcal{D})\bigr).
\end{align}

Since KL is non-negative, we obtain the \emph{evidence lower bound} (ELBO):
\begin{align}
  \log p(\mathcal{D})
  &\ge
  \mathbb{E}_{q_\phi} \bigl[ \log p(\mathcal{D} \mid r) + \log p_0(r) - \log q_\phi(r \mid s,a) \bigr]
  \;=:\; \mathrm{ELBO}(\phi).
\end{align}
Equivalently,
\begin{equation}
  \mathrm{ELBO}(\phi)
  \;=\;
  \mathbb{E}_{q_\phi(r \mid s,a)}\bigl[\log p(\mathcal{D} \mid r)\bigr]
  - \mathrm{KL}\bigl(q_\phi(r \mid s,a) \,\|\, p_0(r)\bigr),
  \label{eq:app-elbo}
\end{equation}
which matches the expression in the main text.

\paragraph{From ELBO to our reward objective.}
Maximizing the ELBO is equivalent to minimizing its negative. 
Using the EBM likelihood from Eq.~\eqref{eq:app-ebm-likelihood},
\[
  \log p(\mathcal{D} \mid r)
  = -\mathcal{L}_{\mathrm{FSD}}(\pi,r) + \text{const},
\]
where the constant does not depend on $r$ and thus can be dropped for optimization. 
Substituting into Eq.~\eqref{eq:app-elbo} and ignoring constants, we obtain the objective
\begin{align}
  \min_\phi \mathcal{L}_r(\phi)
  &:= \min_\phi 
      \mathbb{E}_{q_\phi(r \mid s,a)}\bigl[ \mathcal{L}_{\mathrm{FSD}}(\pi,r) \bigr]
      + \mathrm{KL}\bigl(q_\phi(r \mid s,a) \,\|\, p_0(r)\bigr),
\end{align}
which is precisely Eq.~\ref{eq:reward-update} in the main text.
In other words, we learn the reward distribution by balancing two terms:
(i) the expected FSD violation under $q_\phi$, and
(ii) a regularization term that keeps $q_\phi$ close to the prior $p_0$.

\subsection{Quantiles and the FSD Loss}
\label{app:quantile-representation}

We now explain in more detail why the FSD loss in Eq.~\eqref{eq:app-fsd-loss} can be expressed
in terms of quantile functions, which leads to a practical way to estimate it via sampling.

\paragraph{Quantile function.}
For a random variable $X$ with CDF $F_X$, its (generalized) quantile function $F_X^{-1} : [0,1]\to\mathbb{R}$
is defined by
\[
  F_X^{-1}(v) = \inf\{ x \in \mathbb{R} \mid F_X(x) \ge v \}, \quad v \in (0,1).
\]
Intuitively, $F_X^{-1}(v)$ is the value such that a fraction $v$ of the mass of $X$ lies at or below it.

\paragraph{Key identity.}
We use the following identity (proved in Appendix~\ref{prop_A.1} of the main text):
\begin{equation}
  \int_{-\infty}^\infty [F_{Z^E}(z) - F_{Z^\pi}(z)]_+ \, dz
  \;=\;
  \int_0^1 \bigl[ F_{Z^\pi}^{-1}(v) - F_{Z^E}^{-1}(v) \bigr]_+ \, dv.
  \label{eq:app-quantile-identity}
\end{equation}
This shows that integrating the positive difference of the CDFs is equivalent to integrating
the positive difference of the \emph{quantiles}, but with the roles of expert and policy swapped 
inside the bracket.

\paragraph{Sketch of proof idea.}
The proof relies on two facts:
(i) an integral representation of the difference between two distributions in terms of their quantiles, and
(ii) a change of variables between $z$ and $v$ through the CDF/quantile mapping.
One can start from the left-hand side, partition the real line into regions where
$F_{Z^E}(z) \ge F_{Z^\pi}(z)$ and where the opposite holds, and then perform a change of variables
$z = F_{Z^\pi}^{-1}(v)$ (and similarly for the expert), carefully tracking the positive part.
We refer the reader to the detailed derivation in Appendix~\ref{prop_A.1}.

\paragraph{Monte Carlo approximation.}
The identity \eqref{eq:app-quantile-identity} is particularly useful because we can approximate
quantiles from samples. For example, to approximate $F_{Z^\pi}^{-1}$, we draw $N$ return samples
\[
  z_n = \sum_{t=0}^\infty \gamma^t r_t^{(n)}, 
  \qquad r_t^{(n)} \sim q_\phi(\cdot \mid s_t^{(n)}, a_t^{(n)}),
\]
and sort them to obtain order statistics
\[
  z_{(1)} \le z_{(2)} \le \cdots \le z_{(N)}.
\]
A simple empirical approximation of the quantile function is then
\[
  F_{Z^\pi}^{-1}\!\left(\frac{k}{N}\right) \approx z_{(k)}.
\]
In practice, we use such empirical quantiles (for both the expert and the learned policy)
to estimate the integral on the right-hand side of Eq.~\eqref{eq:app-quantile-identity} via a Riemann sum.

\subsection{Distortion Risk Measures and Their Relation to FSD}
\label{app:drm-intuition}

Finally, we explain how distortion risk measures (DRMs) provide a scalar, risk-sensitive summary
of a return distribution and how they relate to FSD.

\begin{definition}[Distortion function]
A distortion function is a non-decreasing function $\xi : [0,1] \to [0,1]$ such that
$\xi(0) = 0$ and $\xi(1) = 1$. Its \emph{dual distortion} is defined as
\[
  \tilde{\xi}(v) := 1 - \xi(1-v), \quad v \in [0,1].
\]
\end{definition}

\begin{definition}[Distortion risk measure]
For an integrable random variable $X$ and a distortion function $\xi$, the associated
distortion risk measure $M_\xi$ is defined by
\[
  M_\xi(X)
  \;=\;
  \int_0^1 F_X^{-1}(v) \, d\tilde{\xi}(v),
\]
where $F_X^{-1}$ is the quantile function of $X$.
\end{definition}

\paragraph{Intuition.}
The DRM $M_\xi(X)$ aggregates all quantiles of $X$ into a single scalar value, 
with weights determined by $d\tilde{\xi}(v)$. Different choices of $\xi$ emphasize different
parts of the distribution:
for example, a concave $\tilde{\xi}$ assigns more weight to \emph{lower} quantiles, 
which corresponds to risk-averse behavior.

\paragraph{Connection to FSD.}
It is known that if $X \succeq_{\mathrm{FSD}} Y$, then
\[
  M_\xi(X) \;\ge\; M_\xi(Y)
  \quad \text{for every distortion function } \xi.
\]
Furthermore, the converse holds if we require the inequality to hold for \emph{all} distortion functions:
if $M_\xi(X) \ge M_\xi(Y)$ for every distortion function $\xi$, then $X \succeq_{\mathrm{FSD}} Y$.
This shows that DRMs are tightly linked to FSD: they preserve the FSD ordering if we consider all possible distortions.

In our method, we exploit this relationship by replacing the intractable indicator-based weighting of quantiles
(from Eq.~\eqref{eq:iv} in the main text) with a tractable distortion-based weighting. 
This yields a risk-aware policy objective of the form
\[
  \max_{\varphi} \, M_\xi(Z^{\pi_\varphi}) + \mathcal{H}(\pi_\varphi),
\]
which can be optimized with standard policy gradient techniques while still encoding 
a meaningful notion of distributional dominance relative to the expert.

\paragraph{Approximation viewpoint.}
Optimizing $M_\xi(Z^{\pi_\varphi})$ for a \emph{single} distortion function $\xi$ does not guarantee FSD dominance by itself; it corresponds to a weaker condition.
However, as discussed in the main text, if one could optimize this objective for \emph{all} distortion functions simultaneously, then under mild assumptions the resulting policy would satisfy the original FSD-based objective. Our practical objective can therefore be viewed as an approximation that focuses on a particular, user-chosen notion of risk.

}
\section{Proofs}

\subsection{Proofs for sections 4}
% proof of fsd loss using cdf and quantile
We first wish to show that 
\begin{equation}
   \int_{-\infty}^\infty [F_{Z^E}(z) - F_{Z^\pi}(z)]_+ dz = \int_0^1 \bigl[F_{Z^\pi}^{-1}(v)- F_{Z^E}^{-1}(v)\bigr]_+ {d} v.
\end{equation}

\begin{proposition} \label{prop_A.1}
Let \( Z^\pi \) and \( Z^E \) be two real-valued integrable random variables with cumulative distribution functions \( F_{Z^\pi} \) and \( F_{Z^E} \), and corresponding quantile functions \( F_{Z^\pi}^{-1} \) and \( F_{Z^E}^{-1} \). Then we have
\[
\int_{-\infty}^\infty \left[F_{Z^E}(z) - F_{Z^\pi}(z)\right]_+ \, dz
= \int_0^1 \left[ F_{Z^\pi}^{-1}(v) - F_{Z^E}^{-1}(v) \right]_+ \, dv,
\]
where \( [x]_+ := \max(x, 0) \).
\end{proposition}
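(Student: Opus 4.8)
The plan is to prove the identity by a layer-cake (Tonelli) argument that reinterprets \emph{both} integrals as the Lebesgue measure of a single planar region lying between the two curves, rather than by a direct change of variables. A direct substitution is awkward here because two distinct distributions are involved, so there is no single monotone map pushing one integral onto the other; the area viewpoint sidesteps this entirely.

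First I would represent the positive part as an integral of an indicator over the probability level. For each fixed $z$,
\[
[F_{Z^\pi}(z) - F_{Z^E}(z)]_+ = \int_0^1 \mathbbm{1}\{F_{Z^E}(z) < v \le F_{Z^\pi}(z)\}\,dv,
\]
since the set $\{v : F_{Z^E}(z) < v \le F_{Z^\pi}(z)\}$ is an interval of length exactly the positive part and is empty precisely when $F_{Z^\pi}(z) \le F_{Z^E}(z)$. Substituting this into the left-hand side and applying Tonelli's theorem (the integrand is nonnegative, so the interchange of $\int dz$ and $\int dv$ is unconditionally justified) recasts the left-hand side as $\int_0^1 \lambda\{z : F_{Z^E}(z) < v \le F_{Z^\pi}(z)\}\,dv$, where $\lambda$ is Lebesgue measure on the $z$-axis.

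The heart of the argument is to evaluate the inner one-dimensional measure for each fixed level $v$. Here I would invoke the Galois correspondence between a right-continuous CDF and its generalized quantile $F^{-1}(v) = \inf\{z : F(z) \ge v\}$, namely $F^{-1}(v) \le z \;\Leftrightarrow\; v \le F(z)$. Applying this to each CDF converts the two defining inequalities on $F_{Z^\pi}(z)$ and $F_{Z^E}(z)$ into inequalities on $z$ relative to the two quantile values, so the slice $\{z : F_{Z^E}(z) < v \le F_{Z^\pi}(z)\}$ collapses to an interval whose endpoints are $F_{Z^\pi}^{-1}(v)$ and $F_{Z^E}^{-1}(v)$. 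Its length is the positive part of the difference of these two quantiles, which is precisely the integrand appearing on the right-hand side of the claimed identity; integrating over $v$ then finishes the proof.

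The main obstacle, and the step I would treat most carefully, is fixing the \emph{orientation} of the quantile difference. The direction of each equivalence in the Galois correspondence determines which of $F_{Z^\pi}^{-1}(v)$ and $F_{Z^E}^{-1}(v)$ is the upper endpoint of the slice and which is the lower, and it is easy to transpose them and thereby write $F_{Z^\pi}^{-1}(v) - F_{Z^E}^{-1}(v)$ where one means $F_{Z^E}^{-1}(v) - F_{Z^\pi}^{-1}(v)$ or vice versa; I would therefore pin the sign down with an explicit sanity check on a simple pair of shifted, disjointly supported uniforms before committing to the final orientation. The remaining technical points are routine: atoms and flat segments of the CDFs alter the slice only at countably many levels $v$ (those hit by jumps), a set of measure zero; integrability of $Z^\pi$ and $Z^E$ guarantees all integrals are finite; and right-continuity ensures the infima defining the quantiles are attained, so the endpoint inequalities are exact.
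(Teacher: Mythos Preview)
Your approach---write the positive part as a layer-cake integral, swap by Tonelli, then invoke the Galois correspondence $F^{-1}(v)\le z \Leftrightarrow v\le F(z)$---is exactly the route the paper takes. But the orientation worry you flag is not cosmetic: it is fatal to the identity \emph{as stated}. Carrying the correspondence through honestly gives
\[
v \le F_{Z^\pi}(z)\ \Longleftrightarrow\ F_{Z^\pi}^{-1}(v)\le z,
\qquad
F_{Z^E}(z) < v\ \Longleftrightarrow\ z < F_{Z^E}^{-1}(v),
\]
so the slice $\{z: F_{Z^E}(z)<v\le F_{Z^\pi}(z)\}$ is $[F_{Z^\pi}^{-1}(v),\,F_{Z^E}^{-1}(v))$, whose length is $[F_{Z^E}^{-1}(v)-F_{Z^\pi}^{-1}(v)]_+$, not $[F_{Z^\pi}^{-1}(v)-F_{Z^E}^{-1}(v)]_+$. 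Your own proposed sanity check exposes this: take $Z^\pi\equiv 0$, $Z^E\equiv 1$; then $F_{Z^\pi}(z)-F_{Z^E}(z)=1$ on $[0,1)$ so the left side equals $1$, while $F_{Z^\pi}^{-1}\equiv 0$ and $F_{Z^E}^{-1}\equiv 1$ give $\int_0^1[0-1]_+\,dv=0$ on the right. The paper's proof commits the same transposition when it rewrites $\mathbbm{1}\{F_{Z^\pi}(z)\ge v\}$ as $\mathbbm{1}\{F_{Z^\pi}^{-1}(v)\ge z\}$ (the Galois equivalence reverses the inequality between $z$ and $F^{-1}(v)$, it does not preserve its direction).

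In short, your argument is sound and would establish
\[
\int_{-\infty}^{\infty}\bigl[F_{Z^\pi}(z)-F_{Z^E}(z)\bigr]_+\,dz \;=\; \int_0^1 \bigl[F_{Z^E}^{-1}(v)-F_{Z^\pi}^{-1}(v)\bigr]_+\,dv,
\]
but it cannot prove the proposition in its stated form, because that form has the quantile roles on the right-hand side swapped. The step that ``would fail'' is precisely the one you singled out: once you run the shifted-uniforms check you will find the signs do not match the claimed right-hand side.
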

{
\begin{proof}
Note that
\begin{align*}
    \int_{-\infty}^\infty \left[ F_{Z^E}(z) - F_{Z^\pi}(z) \right]_+  dz &= \int_{-\infty}^\infty \int_0^1 \mathbbm{1}_{ F_{Z^E}(z) \geq v\geq F_{Z^\pi}(z) } dv  dz\\
    &= \int_0^1 \int_{-\infty}^\infty \mathbbm{1}_{ F_{Z^E}(z) \geq v\geq F_{Z^\pi}(z) } dv dz\\
    &= \int_0^1 \int_{-\infty}^\infty \mathbbm{1}_{ F_{Z^\pi}^{-1}(v) \geq z \geq F_{Z^E}^{-1}(v) } dv dz\\
    &=  \int_0^1 \left[ F_{Z^\pi}^{-1}(v) - F_{Z^E}^{-1}(v) \right]_+ dv
\end{align*}
The interchange of integrals are permitted by the Theorem of Fubini-Tonelli as everything is positive \citep{heil2019introduction}. Note that the definition of the quantile function \citep{gut2006probability} is:
\[
F^{-1}(v) := \inf_{z\in \mathbb{R}}\{ F(z)\geq v \}.
\]
\end{proof}
}
% proof of prop 4.6
\sufficiencylemma*
\begin{proof} \label{proof:prop_4.6}
Define the difference in quantile functions:
\[
h(v) := F_X^{-1}(v) - F_Y^{-1}(v).
\]

Suppose for contradiction that the set
\[
A := \{ v \in [0,1] | h(v) < 0 \}
\]
has positive Borel measure, i.e., \( \mu(A) > 0 \).
Define a distortion function \( \tilde{\xi}_A \) whose derivative is:
\[
\tilde{\xi}_A'(v) = \begin{cases}
\frac{1}{\mu(A)} & \text{if } v \in A, \\
0 & \text{otherwise}.
\end{cases}
\]
Then \( \tilde{\xi}_A \) is a valid distortion function and satisfies \( \int_0^1 d\tilde{\xi}_A(v) = 1 \).
Note that
\[
\mathcal{M}_{{\xi}_A}(X) - \mathcal{M}_{{\xi}_A}(Y)
= \int_0^1 h(v) \, d\tilde{\xi}_A(v)
= \int_A h(v) \cdot \frac{1}{\mu(A)} \, dv < 0.
\]

This contradicts the assumption that \( \mathcal{M}_{\tilde{\xi}}(X) \geq \mathcal{M}_{\tilde{\xi}}(Y) \) for all distortion functions \( \tilde{\xi} \). Therefore, the set where \( F_X^{-1}(v) < F_Y^{-1}(v) \) must have measure zero.
Thus we have 
\[
F_X^{-1}(v) \geq F_Y^{-1}(v) \quad \text{for } v \in [0,1] \text{ almost everywhere (a.e.)}
\]
which implies
\[
F_X(z) \leq F_Y(z) \quad \text{for all } z \in \mathbb{R},
\]
since 
\begin{align*}
F_X(z) = P_X\left( X<z\right) &= \mu\left( \{v\in[0,1] | F_X^{-1}(v)\leq z\}\right)\\
&\leq \mu\left( \{v\in[0,1]\cap A^c | F_X^{-1}(v)\leq z\}\right) 
+ \mu\left( \{v\in[0,1]\cap A | F_X^{-1}(v)\leq z\}\right) \\
&= \mu\left( \{v\in[0,1]\cap A^c | F_X^{-1}(v)\leq z\}\right) \\
&\leq \mu\left( \{v\in[0,1]\cap A^c | F_Y^{-1}(v)\leq z\}\right)\\
&\leq \mu\left( \{v\in[0,1] | F_Y^{-1}(v)\leq z\}\right)\\
&= F_Y(z)
\end{align*}
The second inequality is due to the fact that for any \(z\),
\[
\{v\in[0,1]\cap A^c | F_X^{-1}(v)\leq z\} \subseteq \{v\in[0,1]\cap A^c | F_Y^{-1}(v)\leq z\}
\]
Hence,
\[
X \succeq_{\text{FSD}} Y.
\]
\end{proof}

{
\subsection{Convergence Analysis}
\label{sec:appendix_convergence}

This appendix provides complete derivations and proofs for the convergence results
summarized in Section~5. We work in the discounted MDP setting with finite action space
$\mathcal A$ and (possibly infinite) state space $\mathcal S$. All function norms are
$\|\cdot\|_\infty$ unless otherwise specified.

We first recall the risk--sensitive Bellman operator. For a fixed policy $\pi$,
reward parameter $\phi$, and bounded $Q:\mathcal S\times\mathcal A\to\mathbb R$,
we write
\begin{equation}
\label{eq:drm-bellman}
\begin{aligned}
(\mathcal T^{\pi}_{\xi,\phi} Q)(s,a)
&:= \E_\xi \big[ q_\phi(s,a) \big]
\;+\;
\gamma\,\E_{\xi,\,s'\sim P(\cdot| s,a),\,a'\sim \pi(\cdot| s')}
    \big[ Q(s',a') \big].
\end{aligned}
\end{equation}
Here the notation $\E_\xi[\cdot]$ denotes the one-step evaluation combining the
conditional expectation over the transition kernel and the dynamic distortion
risk measure $M_\xi$ (i.e.\ a nested, time-consistent dynamic risk mapping).
Under this formulation, $\mathcal T^{\pi}_{\xi,\phi}$ is precisely the DRM
Bellman operator: it preserves the Markov structure and is a $\gamma$-contraction
under mild axioms on $M_\xi$~\citep{ruszczynski2010risk}, guaranteeing a unique
fixed point $Q^{\xi}_{\phi,\pi}$ for each $(\phi,\pi)$.

\subsubsection{Assumptions}

We collect the standing assumptions used in the analysis.

% \begin{assumption}
% \label{ass:bounded-app}
% There exists $R_{\max}<\infty$ such that
% \begin{equation}
% \label{eq:bounded-reward}
% |q_\phi(s,a)| \;\le\; R_{\max}
% \quad\text{almost surely for all }(s,a,\phi).
% \end{equation}
% \end{assumption}
\boundedapp*

This is standard in discounted RL and is enforced in our implementation by
clipping the reward range (via a scaled $\tanh$ nonlinearity). It ensures that
all risk-sensitive value functions are uniformly bounded.

\lawlipapp*

This assumption is mild for smooth neural parameterizations of $q_\phi(r| s,a)$
(e.g., skew-normal with smooth outputs for location, scale, and skew). It states that
small changes in the reward parameters $\phi$ cannot drastically change the reward
distribution, which is necessary for the critic and policy to track the moving reward model.

\drmapp*

For normalized distortion risk measures $M_\xi$ (including CVaR, Wang-type,
and more general spectral DRMs), these properties are standard and follow from
their integral representation in terms of quantile functions.

\subsubsection{Contraction of the nested DRM Bellman operator}

We now verify that $\mathcal T^{\pi}_{\xi,\phi}$ is a $\gamma$-contraction in the
sup norm. This is the risk-sensitive analogue of the standard Bellman contraction
and is a special instance of the general results on nested risk mappings
in~\citet{ruszczynski2010risk,kopa2023contractivity}.

\begin{lemma}[Contraction of $\mathcal T^{\pi}_{\xi,\phi}$]
\label{lem:contraction-app}
Under Assumptions~\ref{ass:bounded-app} and~\ref{ass:drm-app}, for any fixed
$(\phi,\pi)$ and any bounded $U,V:\mathcal S\times\mathcal A\to\mathbb R$,
\begin{equation}
\label{eq:bellman-contraction}
\big\|\mathcal T^{\pi}_{\xi, \phi} U - \mathcal T^{\pi}_{\xi, \phi} V\big\|_\infty
\;\le\; \gamma \,\|U - V\|_\infty.
\end{equation}
\end{lemma}

\begin{proof}
For any $(s,a)$, the immediate reward terms cancel, and we have
\begin{equation}
\label{eq:contraction-pointwise}
\begin{aligned}
&\big|(\mathcal T^{\pi}_{\xi,\phi} U)(s,a) - (\mathcal T^{\pi}_{\xi,\phi} V)(s,a)\big| \\
&\quad= \gamma \big|\E_{\xi,\,s'\sim P(\cdot| s,a)}[U(s',A') - V(s',A')]\big| \\
&\quad\le \gamma \,\E_{s'\sim P(\cdot| s,a)}
      \big[\,\big|M_\xi(U(s',A')-V(s',A')| s')\big|\,\big] \\
&\quad\le \gamma \,\E_{s'\sim P(\cdot| s,a)}
      \big[\|U-V\|_\infty\big]
      \;=\; \gamma \,\|U-V\|_\infty,
\end{aligned}
\end{equation}
where we used Assumption~\ref{ass:drm-app} (1-Lipschitzness) in the third line.
Taking the supremum over $(s,a)$ yields~\ref{eq:bellman-contraction}.
\end{proof}

By the Banach fixed-point theorem, we immediately obtain:

\begin{corollary}[Existence and uniqueness of the risk-sensitive critic]
\label{cor:unique-Q}
Under Assumptions~\ref{ass:bounded-app} and~\ref{ass:drm-app}, for each fixed
$(\phi,\pi)$ there exists a unique $Q^{\xi}_{\phi,\pi}$ solving
\begin{equation}
\label{eq:Q-fixed-point}
Q^{\xi}_{\phi,\pi} = \mathcal T^{\pi}_{\xi,\phi} Q^{\xi}_{\phi,\pi}.
\end{equation}
\end{corollary}

Moreover, the critic is uniformly bounded.

\begin{lemma}
\label{lem:bounded-Q}
Under Assumption~\ref{ass:bounded-app}, let $B_Q:=R_{\max}/(1-\gamma)$. Then
for all $(\phi,\pi)$,
\begin{equation}
\label{eq:Q-bound}
\big\|Q^{\xi}_{\phi,\pi}\big\|_\infty \;\le\; B_Q.
\end{equation}
\end{lemma}

\begin{proof}
By unfolding the fixed point~\ref{eq:Q-fixed-point} along trajectories and
using $|q_\phi(s,a)|\le R_{\max}$, we get for all $(s,a)$
\begin{equation}
\label{eq:Q-series-bound}
\big|Q^{\xi}_{\phi,\pi}(s,a)\big|
\;\le\; \sum_{t=0}^\infty \gamma^t R_{\max}
\;=\; \frac{R_{\max}}{1-\gamma}
\;=\; B_Q.
\end{equation}
Taking the supremum over $(s,a)$ yields~\ref{eq:Q-bound}.
\end{proof}

\subsubsection{Softmax Lipschitz properties}

We next relate $Q$-function errors to policy errors via the softmax parameterization.

\begin{lemma}
\label{lem:logsoftmax-lip}
Let $Q,Q':\mathcal A\to\mathbb R$ be two vectors of $Q$-values, and define
\begin{equation}
\label{eq:softmax-def}
\pi(a) = \frac{e^{Q(a)}}{\sum_b e^{Q(b)}},
\qquad
\pi'(a) = \frac{e^{Q'(a)}}{\sum_b e^{Q'(b)}}.
\end{equation}
Then
\begin{equation}
\label{eq:logsoftmax-lip}
\|\log\pi - \log\pi'\|_\infty \;\le\; 2\,\|Q-Q'\|_\infty.
\end{equation}
\end{lemma}

\begin{proof}
For any action $a$,
\begin{equation}
\label{eq:logpi-diff}
\begin{aligned}
\log\pi(a)
&= Q(a) - \log\sum_b e^{Q(b)}, \\
\log\pi'(a)
&= Q'(a) - \log\sum_b e^{Q'(b)}.
\end{aligned}
\end{equation}
Subtracting,
\begin{equation}
\label{eq:logpi-diff2}
\begin{aligned}
\log\pi(a)-\log\pi'(a)
&= \big(Q(a)-Q'(a)\big)
 - \Big(\log\sum_b e^{Q(b)} - \log\sum_b e^{Q'(b)}\Big).
\end{aligned}
\end{equation}
The log-sum-exp function is $1$-Lipschitz in $\|\cdot\|_\infty$, i.e.
\begin{equation}
\label{eq:lse-lip}
\Big|\log\sum_b e^{Q(b)} - \log\sum_b e^{Q'(b)}\Big|
\;\le\; \|Q-Q'\|_\infty.
\end{equation}
Combining~\ref{eq:logpi-diff2} and~\ref{eq:lse-lip} gives
\begin{equation}
\label{eq:logpi-pointwise}
|\log\pi(a)-\log\pi'(a)|
\;\le\; |Q(a)-Q'(a)| + \|Q-Q'\|_\infty
\;\le\; 2\,\|Q-Q'\|_\infty.
\end{equation}
Taking the supremum over $a$ yields~\ref{eq:logsoftmax-lip}.
\end{proof}

\subsubsection{Lipschitz sensitivity}

We now show that the DRM $Q$-function depends smoothly on the reward parameters
$\phi$, both for optimal control and for fixed-policy evaluation.

\begin{lemma}
\label{lem:Lq-opt}
Suppose Assumptions~\ref{ass:bounded-app}, \ref{ass:lawlip-app}, and
\ref{ass:drm-app} hold. 
Then for all $(s,a)$ and all $\phi_1,\phi_2$,
\begin{equation}
\label{eq:reward-mean-lip-bound}
\big|q_{\phi_1}(s,a)-q_{\phi_2}(s,a)\big|
\;\le\; L_R\,\|\phi_1-\phi_2\|.
\end{equation}
Let $M_\xi$ denote the nested distortion risk functional, and assume it is
$1$-Lipschitz in $\|\cdot\|_\infty$ as in Assumption~\ref{ass:drm-app}. Define
the optimal risk-sensitive $Q$-function for parameter $\phi$ by
\begin{equation}
\label{eq:Q-opt-def}
Q^{\xi}_{\phi,*}(s,a)
:= \sup_{\pi} M_\xi\!\Big(
    \sum_{t=0}^{\infty} \gamma^t r_\phi(s_t,a_t)
    \,\Big|\, s_0=s,a_0=a,\pi
\Big),
\end{equation}
where $\{(s_t,a_t)\}_{t\ge 0}$ is the trajectory under policy $\pi$ starting
from $(s_0,a_0)=(s,a)$. Then there exists
\begin{equation}
\label{eq:Lq-def}
L_q := \frac{L_R}{1-\gamma}
\end{equation}
such that for all $\phi_1,\phi_2$,
\begin{equation}
\label{eq:Lq-opt-bound}
\big\|Q^{\xi}_{\phi_1,*} - Q^{\xi}_{\phi_2,*}\big\|_\infty
\;\le\; L_q \,\|\phi_1 - \phi_2\|.
\end{equation}
\end{lemma}

\begin{proof}
The bound on the reward smoothness is immediately due to assumption \ref{ass:lawlip-app}.
Fix $\phi_1,\phi_2$ and $(s,a)$. For any policy $\pi$, let $\{(s_t,a_t)\}_{t\ge 0}$
be the trajectory under $\pi$ with $(s_0,a_0)=(s,a)$, and define
\begin{equation}
\label{eq:Gphi-def}
G_{\phi_i}^{\pi}
:= \sum_{t=0}^{\infty} \gamma^t q_{\phi_i}(s_t,a_t),
\qquad i\in\{1,2\}.
\end{equation}
By definition~\ref{eq:Q-opt-def},
\begin{equation}
\label{eq:Qopt-as-sup}
Q^{\xi}_{\phi_i,*}(s,a)
= \sup_{\pi} M_\xi\big(G_{\phi_i}^{\pi} \,\big|\, s,a,\pi\big),
\quad i\in\{1,2\}.
\end{equation}
Using the inequality
\begin{equation}
\label{eq:sup-sup-ineq}
\big|\sup_\pi f_\pi - \sup_\pi g_\pi\big|
\;\le\; \sup_\pi |f_\pi-g_\pi|,
\end{equation}
we obtain
\begin{equation}
\label{eq:Qopt-diff1}
\begin{aligned}
\big|Q^{\xi}_{\phi_1,*}(s,a) - Q^{\xi}_{\phi_2,*}(s,a)\big|
&= \Big|\sup_{\pi} M_\xi(G_{\phi_1}^{\pi}| s,a,\pi)
     - \sup_{\pi} M_\xi(G_{\phi_2}^{\pi}| s,a,\pi)\Big| \\
&\le \sup_{\pi} \big|M_\xi(G_{\phi_1}^{\pi}| s,a,\pi)
                     - M_\xi(G_{\phi_2}^{\pi}| s,a,\pi)\big|.
\end{aligned}
\end{equation}
For each fixed $\pi$, the 1-Lipschitz property of $M_\xi$ in $\|\cdot\|_\infty$
(Assumption~\ref{ass:drm-app}) gives
\begin{equation}
\label{eq:Mxi-lip-trajectory}
\begin{aligned}
\big|M_\xi(G_{\phi_1}^{\pi}| s,a,\pi)
   - M_\xi(G_{\phi_2}^{\pi}| s,a,\pi)\big|
&\le \big\|G_{\phi_1}^{\pi} - G_{\phi_2}^{\pi}\big\|_\infty \\
&= \sup_{\omega}
   \Bigg|\sum_{t=0}^{\infty} \gamma^t
       \big(q_{\phi_1}(s_t(\omega),a_t(\omega))
         - q_{\phi_2}(s_t(\omega),a_t(\omega))\big)\Bigg| \\
&\le \sum_{t=0}^{\infty} \gamma^t
       \sup_{(s',a')} \big|q_{\phi_1}(s',a') - q_{\phi_2}(s',a')\big| \\
&\le \sum_{t=0}^{\infty} \gamma^t\,L_R \|\phi_1-\phi_2\| \\
&= \frac{L_R}{1-\gamma}\,\|\phi_1-\phi_2\|.
\end{aligned}
\end{equation}
The bound does not depend on $\pi$, so combining it with Eq.~\ref{eq:Qopt-diff1} we obtain
\begin{equation}
\label{eq:Qopt-diff2}
\big|Q^{\xi}_{\phi_1,*}(s,a) - Q^{\xi}_{\phi_2,*}(s,a)\big|
\;\le\; \frac{L_R}{1-\gamma}\,\|\phi_1-\phi_2\|.
\end{equation}
Taking the supremum over $(s,a)$ yields the desired result.
\end{proof}

\begin{lemma}[Lipschitz continuity of $Q^{\xi}_{\phi,\pi}$ in $\phi$ for fixed policy]
\label{lem:Lq-fixedpi}
Suppose Assumptions~\ref{ass:bounded-app}, \ref{ass:lawlip-app}, and
\ref{ass:drm-app} hold, and fix any stationary policy $\pi$. Define the
risk--sensitive evaluation $Q$-function as
\begin{equation}
\label{eq:Qeval-def}
Q^{\xi}_{\phi,\pi}(s,a)
:= M_\xi\!\Big(
    \sum_{t=0}^{\infty} \gamma^t q_\phi(s_t,a_t)
    \,\Big|\, s_0=s,a_0=a,\pi
\Big),
\end{equation}
where $\{(s_t,a_t)\}_{t\ge 0}$ is the trajectory under $\pi$ starting from
$(s_0,a_0)=(s,a)$. Then for all $\phi_1,\phi_2$,
\begin{equation}
\label{eq:Lq-fixedpi-bound}
\big\|Q^{\xi}_{\phi_1,\pi} - Q^{\xi}_{\phi_2,\pi}\big\|_\infty
\;\le\; L_q \,\|\phi_1 - \phi_2\|,
\qquad
L_q := \frac{L_R}{1-\gamma}.
\end{equation}
\end{lemma}

\begin{proof}
Fix $\pi$ and $(s_0,a_0)=(s,a)$, and let $\{(s_t,a_t)\}_{t\ge 0}$ be the
trajectory under $\pi$. For $i\in\{1,2\}$, define $G_{\phi_i}^{\pi}$ as in
\ref{eq:Gphi-def}. Then by~\ref{eq:Qeval-def},
\begin{equation}
\label{eq:Qeval-Mxi}
Q^{\xi}_{\phi_i,\pi}(s,a)
= M_\xi(G_{\phi_i}^{\pi}| s,a,\pi),
\qquad i\in\{1,2\}.
\end{equation}
Thus
\begin{equation}
\label{eq:Qeval-diff1}
\begin{aligned}
\big|Q^{\xi}_{\phi_1,\pi}(s,a) - Q^{\xi}_{\phi_2,\pi}(s,a)\big|
&= \big|M_\xi(G_{\phi_1}^{\pi}| s,a,\pi)
        - M_\xi(G_{\phi_2}^{\pi}| s,a,\pi)\big| \\
&\le \big\|G_{\phi_1}^{\pi} - G_{\phi_2}^{\pi}\big\|_\infty \\
&\le \frac{L_R}{1-\gamma}\,\|\phi_1-\phi_2\|,
\end{aligned}
\end{equation}
where the last inequality is identical to the bound in
\ref{eq:Mxi-lip-trajectory}. Taking the supremum over $(s,a)$ gives
\ref{eq:Lq-fixedpi-bound}.
\end{proof}

\subsubsection{One-step critic recursion}

We now derive a simple one-step recursion for the critic's tracking error
as the reward parameters $\phi_k$ and policies $\pi_k$ evolve across iterations.

For each iteration $k$, define
\begin{equation}
\label{eq:Ek-def}
E_k
:= \big\|Q^{\xi}_{\phi_k,\pi_k} - Q^{\xi}_{\phi_k,\pi^\star_{\phi_k}}\big\|_\infty,
\end{equation}
where $\pi^\star_{\phi_k}$ is an optimal DRM policy for reward parameter
$\phi_k$, i.e.
\begin{equation}
\label{eq:pi-star-def}
\pi^\star_{\phi_k} \propto \text{softmax}_{\pi} Q^{\xi}_{\phi_k,\pi}.
\end{equation}

\begin{lemma}
\label{lem:onestep-app}
Suppose Assumptions~\ref{ass:bounded-app}, \ref{ass:lawlip-app}, and
\ref{ass:drm-app} hold, and let $L_q$ be as in Lemma~\ref{lem:Lq-fixedpi}.
Then for all $k\ge 1$,
\begin{equation}
\label{eq:onestep-recursion}
E_k
\;\le\;
\gamma\,E_{k-1}
\;+\;
2 L_q \,\|\phi_k - \phi_{k-1}\|.
\end{equation}
\end{lemma}

\begin{proof}
Add and subtract $Q^{\xi}_{\phi_{k-1},\pi_k}$ and
$Q^{\xi}_{\phi_{k-1},\pi^\star_{\phi_{k-1}}}$ inside the norm:
\begin{align}
\label{eq:onestep-split}
\begin{aligned}
&\big\| Q^{\xi}_{\phi_k,\pi_k} - Q^{\xi}_{\phi_k,\pi^\star_{\phi_k}} \big\|_\infty \\
&\quad= \big\| Q^{\xi}_{\phi_k,\pi_k} - Q^{\xi}_{\phi_k,\pi^\star_{\phi_k}}
       + Q^{\xi}_{\phi_{k-1},\pi_k} - Q^{\xi}_{\phi_{k-1},\pi_k}
       + Q^{\xi}_{\phi_{k-1},\pi^\star_{\phi_{k-1}}}
         - Q^{\xi}_{\phi_{k-1},\pi^\star_{\phi_{k-1}}} \big\|_\infty \\
&\quad\le
    \big\|Q^{\xi}_{\phi_{k-1},\pi^\star_{\phi_{k-1}}}
           - Q^{\xi}_{\phi_k,\pi^\star_{\phi_k}} \big\|_\infty
  + \big\|Q^{\xi}_{\phi_k,\pi_k}
           - Q^{\xi}_{\phi_{k-1},\pi_k} \big\|_\infty
  + \big\| Q^{\xi}_{\phi_{k-1},\pi_k}
           - Q^{\xi}_{\phi_{k-1},\pi^\star_{\phi_{k-1}}} \big\|_\infty.
\end{aligned}
\end{align}
By Lemma~\ref{lem:Lq-opt} (with $\pi^\star_{\phi_{k-1}}$ and $\pi^\star_{\phi_k}$
both optimal) and Lemma~\ref{lem:Lq-fixedpi} (with $\pi=\pi_k$), we have
\begin{equation}
\label{eq:onestep-Lq-bounds}
\begin{aligned}
\big\|Q^{\xi}_{\phi_{k-1},\pi^\star_{\phi_{k-1}}}
       - Q^{\xi}_{\phi_k,\pi^\star_{\phi_k}} \big\|_\infty
&\le L_q \,\|\phi_k-\phi_{k-1}\|, \\
\big\|Q^{\xi}_{\phi_k,\pi_k}
       - Q^{\xi}_{\phi_{k-1},\pi_k} \big\|_\infty
&\le L_q \,\|\phi_k-\phi_{k-1}\|.
\end{aligned}
\end{equation}
Therefore,
\begin{equation}
\label{eq:onestep-bound-phi}
\big\| Q^{\xi}_{\phi_k,\pi_k} - Q^{\xi}_{\phi_k,\pi^\star_{\phi_k}} \big\|_\infty
\;\le\;
2 L_q \,\|\phi_k - \phi_{k-1}\|
\;+\;
\big\| Q^{\xi}_{\phi_{k-1},\pi_k}
       - Q^{\xi}_{\phi_{k-1},\pi^\star_{\phi_{k-1}}} \big\|_\infty.
\end{equation}

Next observe that for fixed $\phi_{k-1}$, $\pi^\star_{\phi_{k-1}}$ is optimal, so
\begin{equation}
\label{eq:Q-ordering}
Q^{\xi}_{\phi_{k-1},\pi_k}
\;\le\; Q^{\xi}_{\phi_{k-1},\pi^\star_{\phi_{k-1}}}
\quad\text{pointwise}.
\end{equation}
Moreover, by monotonicity of the Bellman operator and
Lemma~\ref{lem:contraction-app},
\begin{equation}
\label{eq:Bellman-ordering}
0 \;\le\;
Q^{\xi}_{\phi_{k-1},\pi^\star_{\phi_{k-1}}}
 - Q^{\xi}_{\phi_{k-1},\pi_k}
\;\le\;
\mathcal T^{\pi_k}_{\xi,\phi_{k-1}}
  \big(Q^{\xi}_{\phi_{k-1},\pi^\star_{\phi_{k-1}}}
      - Q^{\xi}_{\phi_{k-1},\pi_k}\big),
\end{equation}
so taking norms and using~\ref{eq:bellman-contraction} gives
\begin{equation}
\label{eq:Bellman-contracted-gap}
\big\|Q^{\xi}_{\phi_{k-1},\pi^\star_{\phi_{k-1}}}
     - Q^{\xi}_{\phi_{k-1},\pi_k}\big\|_\infty
\;\le\;
\gamma \big\|Q^{\xi}_{\phi_{k-1},\pi^\star_{\phi_{k-1}}}
           - Q^{\xi}_{\phi_{k-1},\pi_{k-1}}\big\|_\infty
\;=\; \gamma\,E_{k-1}.
\end{equation}
So that we get
\begin{equation}
\label{eq:onestep-final}
E_k
= \big\| Q^{\xi}_{\phi_k,\pi_k} - Q^{\xi}_{\phi_k,\pi^\star_{\phi_k}} \big\|_\infty
\;\le\;
\gamma\,E_{k-1}
\;+\;
2 L_q \,\|\phi_k - \phi_{k-1}\|,
\end{equation}
as claimed.
\end{proof}

\subsubsection{Smooth reward updates and averaged critic tracking}

We now relate the parameter drift $\|\phi_k-\phi_{k-1}\|$ to the reward update
objective $\mathcal L_r(\phi)$ used in Eq.~\ref{eq:reward-update}.

\begin{assumption}[Smoothness and bounded gradients of the reward objective]
\label{ass:smooth-grad}
Let $\mathcal L_r(\phi)$ denote the reward-distribution objective in
Eq.~\ref{eq:reward-update}. Assume:
\begin{enumerate}
    \item $\mathcal L_r$ is differentiable and its gradient is $L_\nabla$--Lipschitz:
    \begin{equation}
    \label{eq:grad-lip}
    \big\|\nabla \mathcal L_r(\phi_1) - \nabla \mathcal L_r(\phi_2)\big\|
    \;\le\; L_\nabla \,\|\phi_1-\phi_2\|
    \quad\text{for all }\phi_1,\phi_2.
    \end{equation}
    \item The iterates $\{\phi_k\}$ are projected onto a compact set
    $\Phi \subset \mathbb R^d$, so that
    \begin{equation}
    \label{eq:Gmax-def}
    G_{\max}
    \;:=\; \sup_{\phi\in\Phi} \big\|\nabla \mathcal L_r(\phi)\big\|
    \;<\; \infty.
    \end{equation}
\end{enumerate}
The reward update step is
\begin{equation}
\label{eq:phi-update}
\phi_k
= \Pi_\Phi\!\big(\phi_{k-1} - \eta_{k-1}\,\nabla \mathcal L_r(\phi_{k-1})\big),
\end{equation}
where $\Pi_\Phi$ is the Euclidean projection onto $\Phi$ and
$\{\eta_k\}$ is a deterministic stepsize schedule. We state the assumption with projection for generality; in our implementation, the parameterization and reward clipping make this projection implicit.
\end{assumption}

\begin{lemma}
\label{lem:phi-drift}
Under Assumption~\ref{ass:smooth-grad},
\begin{equation}
\label{eq:phi-drift}
\|\phi_k - \phi_{k-1}\|
\;\le\; \eta_{k-1}\,G_{\max}.
\end{equation}
\end{lemma}

\begin{proof}
By non-expansiveness of the projection,
\begin{equation}
\label{eq:proj-nonexp}
\begin{aligned}
\|\phi_k - \phi_{k-1}\|
&= \big\|\Pi_\Phi(\phi_{k-1} - \eta_{k-1}\nabla \mathcal L_r(\phi_{k-1}))
      - \Pi_\Phi(\phi_{k-1})\big\| \\
&\le \eta_{k-1}\,\big\|\nabla \mathcal L_r(\phi_{k-1})\big\|
\;\le\; \eta_{k-1}\,G_{\max},
\end{aligned}
\end{equation}
which is~\ref{eq:phi-drift}.
\end{proof}

Now we are ready to get the main recursion formula.

\qconvnew*

\begin{proof}
By Lemmas~\ref{lem:onestep-app} and~\ref{lem:phi-drift},
\begin{equation}
\label{eq:Ek-alpha}
E_k \;\le\; \gamma E_{k-1} + 2 L_q G_{\max}\eta_{k-1}.
\end{equation}
Taking the sum, we have
\begin{equation}
\label{eq:Ek-unroll}
\sum_{k=1}^K E_k
\;\le\; \sum_{k=1}^K \gamma  E_{k-1}
+ 2 L_q G_{\max} \sum_{k=1}^K  \eta_{k-1}.
\end{equation}
Rearrange and average over $K$ gives
\begin{equation}
\label{eq:avg-Ek-step1}
\frac{1-\gamma}{K}\sum_{k=1}^K E_k
\;\le\;
\frac{\gamma}{ K} \left(E_0 - E_K \right)
+ 2 L_q G_{\max} \eta K^{-\sigma}.
\end{equation}
Dividing both sides by $1-\gamma$ gives
\begin{align}
    \frac{1}{K}\sum_{k=1}^K E_k \leq \frac{\gamma}{(1-\gamma) K} C_0
+ \frac{1}{1-\gamma}2 L_q G_{\max} \eta K^{-\sigma} .
\end{align}
This proves the claim.
\end{proof}

\subsubsection{Policy convergence in log-probability}

Finally, we transfer the critic tracking guarantees to the induced policies.

\policyconvnew*
\begin{proof}
Fix $k$ and $s$. Let
\begin{equation}
\label{eq:x-y-def}
x(\cdot) = Q^{\xi}_{\phi_k,\pi_k}(s,\cdot),
\qquad
y(\cdot) = Q^{\xi}_{\phi_k,\pi^\star_{\phi_k}}(s,\cdot).
\end{equation}
By Lemma~\ref{lem:logsoftmax-lip},
\begin{equation}
\label{eq:policy-gap-state}
\big\|\log \pi_k^{+}(\cdot| s) - \log \pi^{\star+}_{\phi_k}(\cdot| s)\big\|_\infty
\;\le\; 2 \|x-y\|_\infty.
\end{equation}
Taking the supremum over $s$ yields
\begin{equation}
\label{eq:policy-gap-global}
\big\|\log \pi_k^{+} - \log \pi^{\star+}_{\phi_k}\big\|_\infty
\;\le\;
2 \big\|Q^{\xi}_{\phi_k,\pi_k} - Q^{\xi}_{\phi_k,\pi^\star_{\phi_k}}\big\|_\infty
\;=\; 2 E_k.
\end{equation} 
Averaging over $k=1,\dots,K$ and
substituting the bound from Theorem~\ref{thm:q-conv-new} gives
Eq.~\ref{eq:policy-gap-avg}.
\end{proof}

\subsubsection{First--order convergence of the reward update}

We now show that, under mild additional conditions, the reward update drives the gradient of the reward objective to zero in an averaged sense, so that the iterates approach a stationary point of the inner minimization problem over $\phi$. This is the strongest guarantee available without assuming that the reward function approximator is convex. 

Recall that the reward objective $\mathcal{L}_r(\phi)$ and its update rule
were introduced in Assumption~\ref{ass:smooth-grad}.  The update at
iteration $k$ is
\begin{equation}
    \phi_{k+1}
    = \Pi_\Phi\!\bigl(\phi_k - \eta_k g_k\bigr),
\end{equation}
where $g_k$ is the stochastic gradient computed using the current critic
$Q^{\xi}_{\phi_k,\pi_k}$ and policy $\pi_k$.

\begin{assumption}[Gradient estimator and critic bias]
\label{ass:grad-bias}
Let $\mathcal{F}_k$ denote the filtration generated by all randomness up
to iteration $k$.
Assume that the stochastic gradient $g_k$ satisfies, for some constants
$C_g,G_g>0$,
\begin{align}
    \bigl\|\E[g_k | \mathcal{F}_k] - \nabla\mathcal{L}_r(\phi_k)\bigr\|
    &\le C_g\,E_k, \label{eq:grad-bias-bound} \\
    \E\bigl[\|g_k\|^2\bigr] &\le G_g^2, \label{eq:grad-second-moment}
\end{align}
where
\begin{equation}
    E_k
    := \bigl\|Q^{\xi}_{\phi_k,\pi_k} - Q^{\xi}_{\phi_k,\pi^\star_{\phi_k}}\bigr\|_\infty
\end{equation}
is the critic tracking error defined above.
\end{assumption}

Intuitively, \eqref{eq:grad-bias-bound} states that the gradient bias
vanishes as soon as the critic tracks the DRM--optimal $Q$ well (i.e.,
$E_k$ is small), which is consistent with the inequality in
\eqref{eq:policy-gap-avg}: a small critic gap implies a small
occupancy--measure mismatch, hence a small gradient bias.  The
second--moment bound \eqref{eq:grad-second-moment} is standard in
nonconvex stochastic optimization.

\firstorder*

\begin{proof}
We begin from the smoothness inequality with
$\phi'=\phi_{k+1}$, $\phi=\phi_k$:
\begin{align}
    \mathcal L_r(\phi_{k+1})
    &\le
    \mathcal L_r(\phi_k)
    + \big\langle \nabla\mathcal L_r(\phi_k),
                 \phi_{k+1}-\phi_k\big\rangle
    + \frac{L_\nabla}{2}\,\|\phi_{k+1}-\phi_k\|^2.
    \label{eq:station-1}
\end{align}
By the non-expansiveness of the projection $\Pi_\Phi$ and the update rule,
\begin{align}
    \|\phi_{k+1}-\phi_k\|
    &= \big\|\Pi_\Phi(\phi_k - \eta_k g_k) - \Pi_\Phi(\phi_k)\big\|
    \nonumber\\
    &\le \eta_k \|g_k\|.
    \label{eq:projection-1}
\end{align}
Moreover,
\begin{align}
    \big\langle \nabla\mathcal L_r(\phi_k),\phi_{k+1}-\phi_k\big\rangle
    &= \big\langle \nabla\mathcal L_r(\phi_k),
                  \Pi_\Phi(\phi_k - \eta_k g_k)-\phi_k\big\rangle
    \nonumber\\
    &\le \big\langle \nabla\mathcal L_r(\phi_k),
                  -\eta_k g_k\big\rangle \\
    &= -\eta_k \big\langle \nabla\mathcal L_r(\phi_k), g_k\big\rangle.
    \label{eq:inner-prod-1}
\end{align}
Substituting the above into Eq.~\ref{eq:station-1} yields
\begin{align}
    \mathcal L_r(\phi_{k+1})
    &\le
    \mathcal L_r(\phi_k)
    - \eta_k \big\langle \nabla\mathcal L_r(\phi_k), g_k\big\rangle
    + \frac{L_\nabla}{2} \eta_k^2 \|g_k\|^2.
    \label{eq:station-2}
\end{align}
Taking conditional expectation and expanding the inner product gives
\[
\big\langle \nabla\mathcal L_r(\phi_k), \E[g_k|\mathcal F_k]\big\rangle
= \frac{1}{2}\Big(
    \|\nabla\mathcal L_r(\phi_k)\|^2
    + \|\E[g_k|\mathcal F_k]\|^2
    - \|\E[g_k|\mathcal F_k] - \nabla\mathcal L_r(\phi_k)\|^2
\Big),
\]
which gives
\begin{align}
    - \eta_k \big\langle \nabla\mathcal L_r(\phi_k), \E[g_k|\mathcal F_k]\big\rangle
    &= -\frac{\eta_k}{2}\|\nabla\mathcal L_r(\phi_k)\|^2
       -\frac{\eta_k}{2}\|\E[g_k|\mathcal F_k]\|^2
       +\frac{\eta_k}{2}\|\E[g_k|\mathcal F_k] - \nabla\mathcal L_r(\phi_k)\|^2.
    \label{eq:inner-expansion}
\end{align}
Substituting Eq.~\ref{eq:inner-expansion} into Eq.~\ref{eq:station-2} we obtain
\begin{align}
    \mathcal L_r(\phi_{k+1})
    &\le
    \mathcal L_r(\phi_k)
    -\frac{\eta_k}{2}\|\nabla\mathcal L_r(\phi_k)\|^2
    -\frac{\eta_k}{2}\|\E[g_k|\mathcal F_k]\|^2
    +\frac{\eta_k}{2}\|\E[g_k|\mathcal F_k] - \nabla\mathcal L_r(\phi_k)\|^2
    + \frac{L_\nabla}{2}\eta_k^2\E[\|g_k\|^2|\mathcal F_k]
    \nonumber\\
    &\le
    \mathcal L_r(\phi_k)
    -\frac{\eta_k}{2}\|\nabla\mathcal L_r(\phi_k)\|^2
    +\frac{\eta_k}{2}\|\E[g_k|\mathcal F_k] - \nabla\mathcal L_r(\phi_k)\|^2
    + \frac{L_\nabla}{2}\eta_k^2\E[\|g_k\|^2|\mathcal F_k]
    \label{eq:station-3}
\end{align}
where we discarded the negative term $-\frac{\eta_k}{2}\|\E[g_k|\mathcal F_k]\|^2$.
Next we bound the bias term.
Condition on $\phi_k$, and use $\|\E[g_k|\mathcal F_k] - \nabla\mathcal L_r(\phi_k)\|
\le C_g E_k$:
\begin{align}
    \E\big[\|\E[g_k|\mathcal F_k] - \nabla\mathcal L_r(\phi_k)\|^2\big]
    &\le \E\big[C_g^2 E_k^2\big]
    \nonumber\\
    &\le C_g^2\,\E[E_k^2]
    \;\le\; C C_g^2\,\E[E_k],
    \label{eq:bias-bound}
\end{align}
where we used $E_k\ge 0$ and $E_k\le C'\|Q\|_\infty$ so that $E_k^2\le C E_k$.
Similarly,
\begin{equation}
    \label{eq:gk-norm-bound}
    \E\big[\|g_k\|^2\big] \le G_g^2.
\end{equation}
Taking expectations of Eq.~\ref{eq:station-3} and applying Eq.~\ref{eq:bias-bound}-\ref{eq:gk-norm-bound} gives
\begin{align}
    \E[\mathcal L_r(\phi_{k+1})]
    &\le
    \E[\mathcal L_r(\phi_k)]
    -\frac{\eta_k}{2}
       \E\big[\|\nabla\mathcal L_r(\phi_k)\|^2\big]
    +\frac{\eta_k}{2} C_g^2\,\E[E_k]
    +\frac{L_\nabla}{2}\eta_k^2 G_g^2.
    \label{eq:station-4}
\end{align}
Rearrange Eq.~\ref{eq:station-4} as
\begin{align}
    \frac{\eta_k}{2}
       \E\big[\|\nabla\mathcal L_r(\phi_k)\|^2\big]
    &\le
    \E[\mathcal L_r(\phi_k)] - \E[\mathcal L_r(\phi_{k+1})]
    +\frac{\eta_k}{2} C C_g^2\,\E[E_k]
    +\frac{L_\nabla}{2}\eta_k^2 G_{\max}^2.
    \label{eq:station-5}
\end{align}
Multiply both sides by $2/\eta_k$:
\begin{align}
    \E\big[\|\nabla\mathcal L_r(\phi_k)\|^2\big]
    &\le
    \frac{2}{\eta_k}
    \big(\E[\mathcal L_r(\phi_k)] - \E[\mathcal L_r(\phi_{k+1})]\big)
    + C_g^2\,\E[E_k]
    + L_\nabla \eta_k G_g^2.
    \label{eq:station-6}
\end{align}

Now sum Eq.~\ref{eq:station-6} over $k=0,\dots,K-1$:
\begin{align}
    \sum_{k=0}^{K-1}
    \E\big[\|\nabla\mathcal L_r(\phi_k)\|^2\big]
    &\le
    2\sum_{k=0}^{K-1}
        \frac{\E[\mathcal L_r(\phi_k)] - \E[\mathcal L_r(\phi_{k+1})]}{\eta_k}
    + C_g^2 \sum_{k=0}^{K-1}\E[E_k]
    + L_\nabla G_g^2 \sum_{k=0}^{K-1} \eta_k.
    \label{eq:station-7}
\end{align}
Using the boundedness equation and the fact that $\eta_k$ is constant, we bound the first sum as
\begin{align}
    \sum_{k=0}^{K-1}
        \frac{\E[\mathcal L_r(\phi_k)] - \E[\mathcal L_r(\phi_{k+1})]}{\eta_k}
    &= \sum_{k=0}^{K-1}
        \Big(\E[\mathcal L_r(\phi_k)] - \E[\mathcal L_r(\phi_{k+1})]\Big)
           \frac{1}{\eta_k}
    \nonumber\\
    &=
    \frac{K^{\sigma}}{\eta}
    \sum_{k=0}^{K-1}
        \Big(\E[\mathcal L_r(\phi_k)] - \E[\mathcal L_r(\phi_{k+1})]\Big)
    \nonumber\\
    &= \frac{K^{\sigma}}{\eta}
       \Big(\E[\mathcal L_r(\phi_0)] - \E[\mathcal L_r(\phi_K)]\Big)
    \nonumber\\
    &\le \frac{\mathcal L_{\max}-\mathcal L_{\min}}{\eta} K^{\sigma}.
    \label{eq:station-8}
\end{align}
Next, apply the averaged tracking bound on the action-value function:
\begin{align}
    \sum_{k=0}^{K-1}\E[E_k]
    &= K \cdot \frac{1}{K}\sum_{k=0}^{K-1}\E[E_k]
    \nonumber\\
    % &\le K\Big(C_E^{(1)}K^{-1} + C_E^{(2)}K^{-\sigma}\Big)
    % \nonumber\\
    &= \mathcal O\big(1\big) + \mathcal{O}(K^{1-\sigma})
    \label{eq:station-9}
\end{align}
Finally, since $\eta_k = \eta K^{-\sigma}$ with $0<\sigma<1$,
\begin{align}
    \sum_{k=0}^{K-1}\eta_k
    &= \eta \sum_{k=1}^{K} K^{-\sigma}
    =   \mathcal O\big(K^{1-\sigma}\big).
    \label{eq:station-10}
\end{align}

% Substituting Eq.~\ref{eq:station-8}--\ref{eq:station-10} into Eq.~\ref{eq:station-7}, we obtain
% \begin{align}
%     \sum_{k=0}^{K-1}
%     \E\big[\|\nabla\mathcal L_r(\phi_k)\|^2\big]
%     &\le
%     2\,\frac{\mathcal L_{\max}-\mathcal L_{\min}}{\eta_{K-1}}
%     + C_g^2\big(C_E^{(1)} + C_E^{(2)}K^{1-\sigma}\big)
%     + L_\nabla G_{\max}^2 \cdot \mathcal O\big(K^{1-\sigma}\big).
%     \label{eq:station-11}
% \end{align}
Divide both sides by $K$:
\begin{align}
    \frac{1}{K}\sum_{k=0}^{K-1}
    \E\big[\|\nabla\mathcal L_r(\phi_k)\|^2\big]
    &\le
    \frac{2(\mathcal L_{\max}-\mathcal L_{\min})}{\eta} K^{\sigma-1} \\
    & + \frac{\gamma}{(1-\gamma)} C_0 C_g^2 / K
+ \frac{C_g^2}{1-\gamma}2 L_q G_{\max} \eta K^{-\sigma}\\
& + L_\nabla G_g^2\,\eta K^{- 1}.
    \label{eq:station-12}
\end{align}
So that we have 
\begin{equation}
    \frac{1}{K}\sum_{k=0}^{K-1}
    \E\big[\|\nabla\mathcal L_r(\phi_k)\|^2\big]
    \;=\;
    \mathcal{O}(K^{-\sigma}) +  \mathcal{O}(K^{-1+\sigma}) +  \mathcal{O}(K^{-1}),
\end{equation}
Since $0<\sigma<1$, all three terms vanish as $K\to\infty$, so the
averaged squared gradient converges to zero.
\end{proof}

}
\section{Model architecture and hyper-parameters} \label{model_params}
Throughout this paper, we use the following model architecture for all the experiments. 

\begin{table}[h]
\centering
\caption{Model Parameters for DistIRL}
\begin{tabular}{ll}
\hline
\textbf{Parameter} & \textbf{Value} \\
\hline
\multicolumn{2}{l}{\textbf{Training Parameters}} \\
\hline
Learning Rate & $3 \times 10^{-4}$ \\
Batch Size & 512 \\
Total Iterations & 5,000 \\
Entropy Coefficient & 0.1 \\
Risk Measure & CVaR \\
Risk Parameter & 0.05 \\
Reward Regularization & 0.01 \\
\hline
\multicolumn{2}{l}{\textbf{Network Architecture}} \\
\hline
Policy Network & [256, 128] \\
Reward Distribution Family & Task-dependent (Gaussian, skew-normal, or quantile) \\
Reward Range & [-5.0, 5.0] \\
Number of Quantiles & 200 \\
Reward Hidden Features & 128 \\
\hline
\end{tabular}
\label{tab:model_params}
\end{table}
For gridworld, we specify the reward range as \( [0, 2]\). For MuJoCo tasks, \([-10, 10]\). This is achieved by applying a (scaled) tanh function.

{
\section{Additional Ablation studies}
\subsection{Ablation on choices of DRM and its parameter}
\label{appendix:drm_choices}
{
In this section, we present additional ablation studies. First, we evaluate the performance of DistIRL on the risk-averse D4RL dataset with different choices of DRM in the HalfCheetah instance. Note that for CVaR and VaR, the smaller distortion parameter $\eta$ is, the more risk-averse the policy will be. For Wang's risk measure, which has parameter $\eta$ ranging from $-1$ to $1$, the policy varies from risk-seeking to risk-averse, with $\eta=0$ having risk-neutral behavior. The choice of risk parameter affects the shape $\tilde{\xi}'$, which affects the solution quality of the policy optimization problem in Eq.~\ref{eq:policy-learning}.}

{
Table~\ref{table:ablation_drm} demonstrates the effects of different choices of risk measure and its risk parameter. Note that since the data is generated by a risk-averse policy, a risk-averse DRM produces the best result, while risk-neutral policies are substantially worse, and risk-seeking policies fail to capture the expert's behavior. 
}

\begin{table}[ht]
\small
  \caption{Performance on distributional reward settings (D4RL).}
  \label{table:ablation_drm}
  \centering
  \vspace{-0.1in}
\begin{tabular}{c|cccc|c}
    \toprule
    DRM & $\eta=0.05$ & $\eta=0.5$ & $\eta=0.9$ & $\eta=-0.5$ & $\eta=-0.9$ \\ \midrule
    CVaR & $3539.74 \pm 44.26$ & $3384.27 \pm 151.06$ & $2851.13 \pm 689.67$ & -   & - \\
    VaR      & $3539.12 \pm 76.77$ & $3423.43 \pm 113.72$  & $3081.96 \pm 522.94$  & -  & - \\
    Wang    & $2670.42 \pm 730.93$ & $2849.94 \pm 1220.71$  & $3439.46 \pm 314.48$ & $1755.25 \pm 13.42$ & $444.62 \pm 1.90$ \\
    \bottomrule
  \end{tabular}
\end{table}

\subsection{Ablation on Number of Trajectories}
\label{appendix:number_of_trajs}
\begin{table}[ht]
\small
  \caption{Performance averaged over 5 seeds for varying dataset sizes (10, 5, 3, 1 trajectories).}
  \label{table:small_datasets}
  \centering
  \vspace{-0.1in}
  \begin{tabular}{c|cccc}
    \toprule
    Environment & 10 & 5 & 3 & 1 \\ 
    \midrule
    HalfCheetah 
        & $3539.74 \pm 44.26$ 
        & $3440.67 \pm 58.48$ 
        & $3501.53 \pm 91.82$ 
        & $3238.49 \pm 339.72$ \\
    Hopper      
        & $886.44 \pm 0.79$  
        & $888.71 \pm 20.16$ 
        & $893.15 \pm 14.13$ 
        & $748.93 \pm 112.53$ \\
    Walker2d    
        & $1526.46 \pm 148.24$ 
        & $1291.44 \pm 759.45$ 
        & $1143.62 \pm 231.05$ 
        & $1151.86 \pm 180.98$ \\
    \bottomrule
  \end{tabular}
\end{table}
\vspace{-0.5\baselineskip}

{
In addition to the main comparison, we conduct an ablation study on the number of expert trajectories used to train our DistIRL algorithm. For each environment, we construct datasets with $\{10, 5, 3, 1\}$ expert trajectories, and train our method on each of these datasets independently. The evaluation protocol is kept identical to the main experiments. We report the average return over $5$ random seeds, with the standard deviation across seeds.
}

{
Table~\ref{table:small_datasets} summarizes the results. Overall, the performance degrades as the number of trajectories decreases, which is expected given the reduced coverage of the expert behavior. 
Nevertheless, our IRL algorithm remains reasonably robust in the low-data regime. With as few as $3$ to $5$ trajectories, it still achieves returns close to those obtained with $10$ trajectories on most tasks. Even in the extreme case of a single trajectory, the learned policies retain non-trivial performance, indicating that the method can extract useful structure from highly limited expert demonstrations.
}

\section{Additional Results on Matching Return Distribution}
\label{appendix:match_return_distribution}
\begin{figure}
    \centering
    \includegraphics[width=\linewidth]{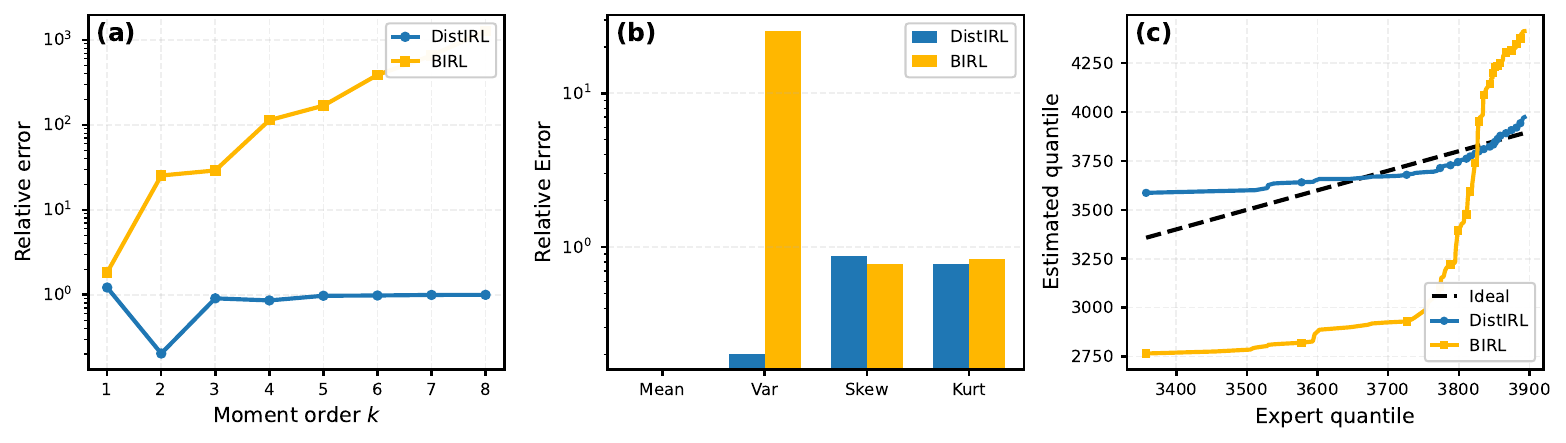}
    \caption{Distributional fidelity of learned return distributions on HalfCheetah. Left: relative errors of higher-order moments. Middle: summarized moment errors up to kurtosis. Right: estimated-versus-expert quantile alignment, where the diagonal indicates perfect matching.}
    \label{fig:return_dist_moments}
\end{figure}

Figure~\ref{fig:return_dist_moments} presents a comparison of distributional fidelity between DistIRL and BIRL using three metrics: (a) relative errors of higher-order moments, (b) summarized moment errors up to kurtosis, and (c) estimated--versus--expert quantile alignment. 
In (a), DistIRL maintains consistently low relative error across all moment orders, demonstrating its ability to capture not only the mean and variance but also the skewness and tail behavior of the expert return distribution. 
In contrast, BIRL's error grows rapidly with increasing moment order, indicating limited capacity to recover higher-order structure. 
Panel~(b) further highlights this gap, showing that DistIRL achieves uniformly low errors on the first four moments, whereas BIRL exhibits substantial discrepancies, particularly in variance and higher moments. 
Panel~(c) compares estimated and expert quantiles, where the dashed diagonal represents perfect alignment. DistIRL closely follows this ideal mapping across the entire range, while BIRL deviates significantly, especially in the upper tail. 
Overall, this figure illustrates that DistIRL reconstructs the full return distribution with higher accuracy than BIRL, which is necessary for risk-sensitive learning and downstream decision-making under uncertainty.
}

\section{Additional Results on Dopamine Level} \label{additional_results_da}
Figures in this section provide additional state-action examples from the mouse spontaneous behavior dataset. They are selected to show representative dopamine-response shapes across different syllable transitions and to complement the aggregate Wasserstein and correlation metrics in the main text. Across these examples, DistIRL is intended to recover not only the mean reward level but also the shape of the empirical dopamine fluctuation distribution.
\begin{figure}[h]
    \centering
    \includegraphics[width=0.8\textwidth]{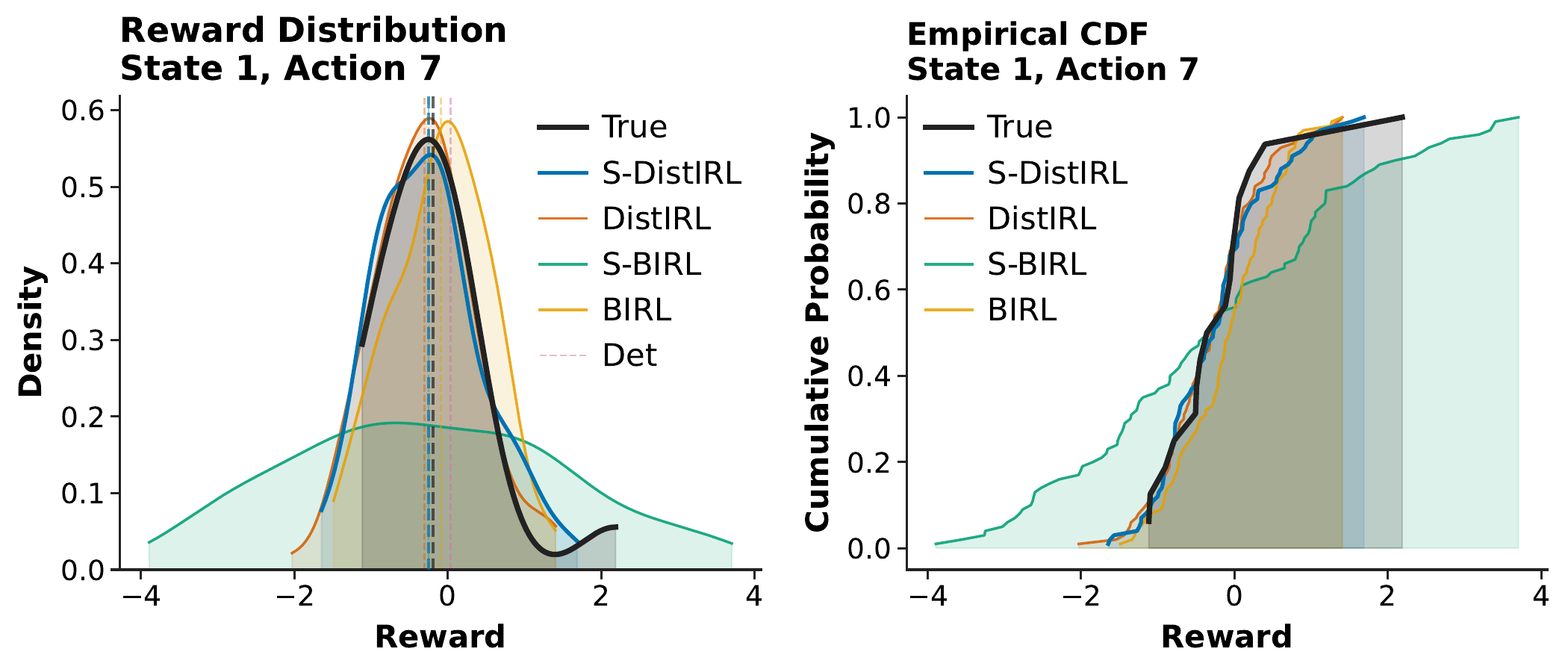}
    \caption{Reward recovery for state 1 action 7}
\end{figure}

\begin{figure}[h]
    \centering
    \includegraphics[width=0.8\textwidth]{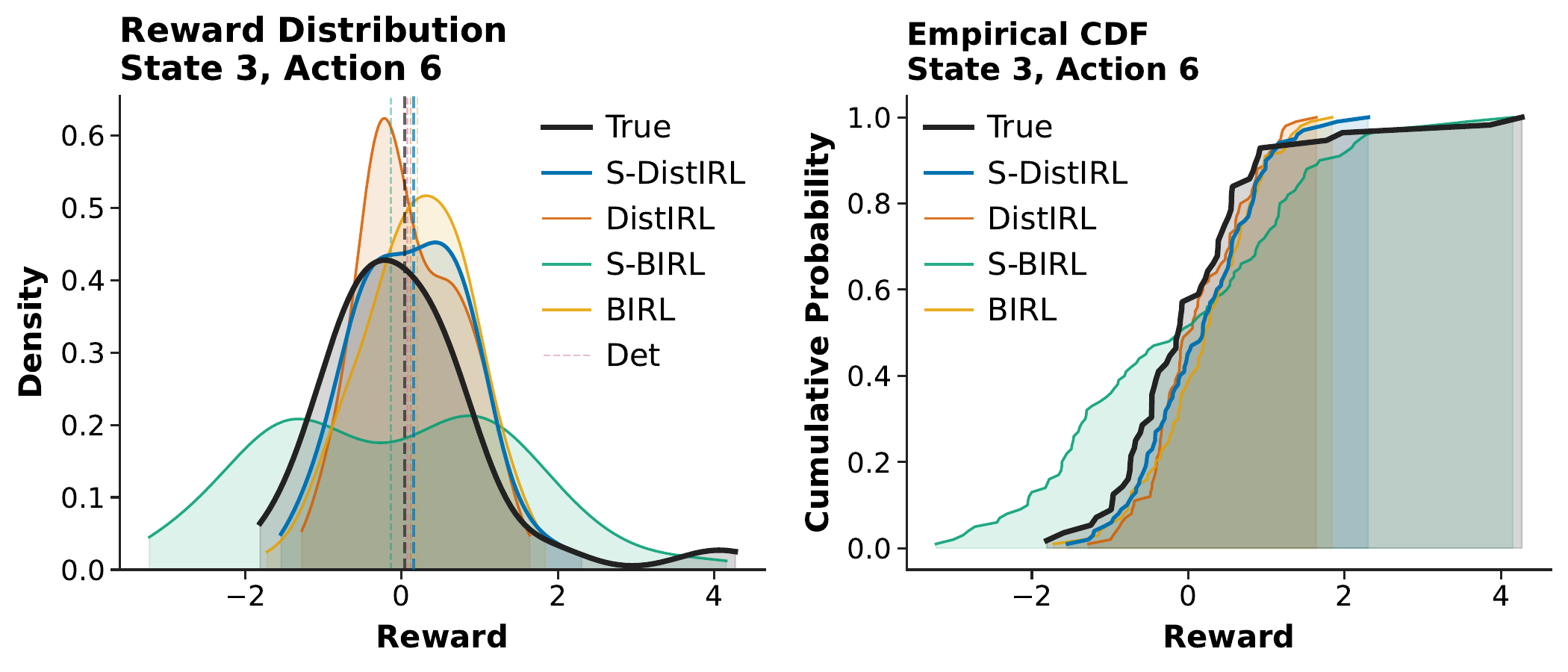}
    \caption{Reward recovery for state 3 action 6}
\end{figure}

\begin{figure}[h]
    \centering
    \includegraphics[width=0.8\textwidth]{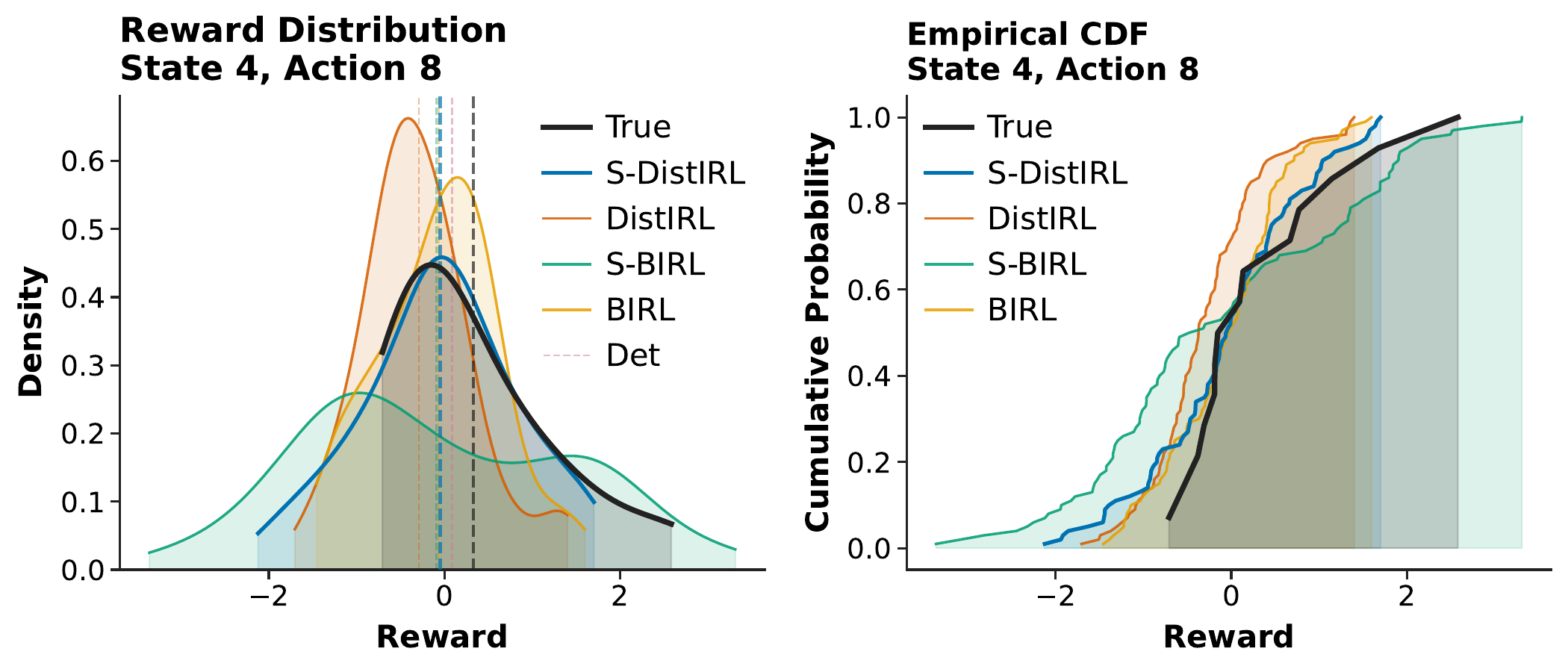}
    \caption{Reward recovery for state 4 action 8}
\end{figure}

\begin{figure}[h]
    \centering
    \includegraphics[width=0.8\textwidth]{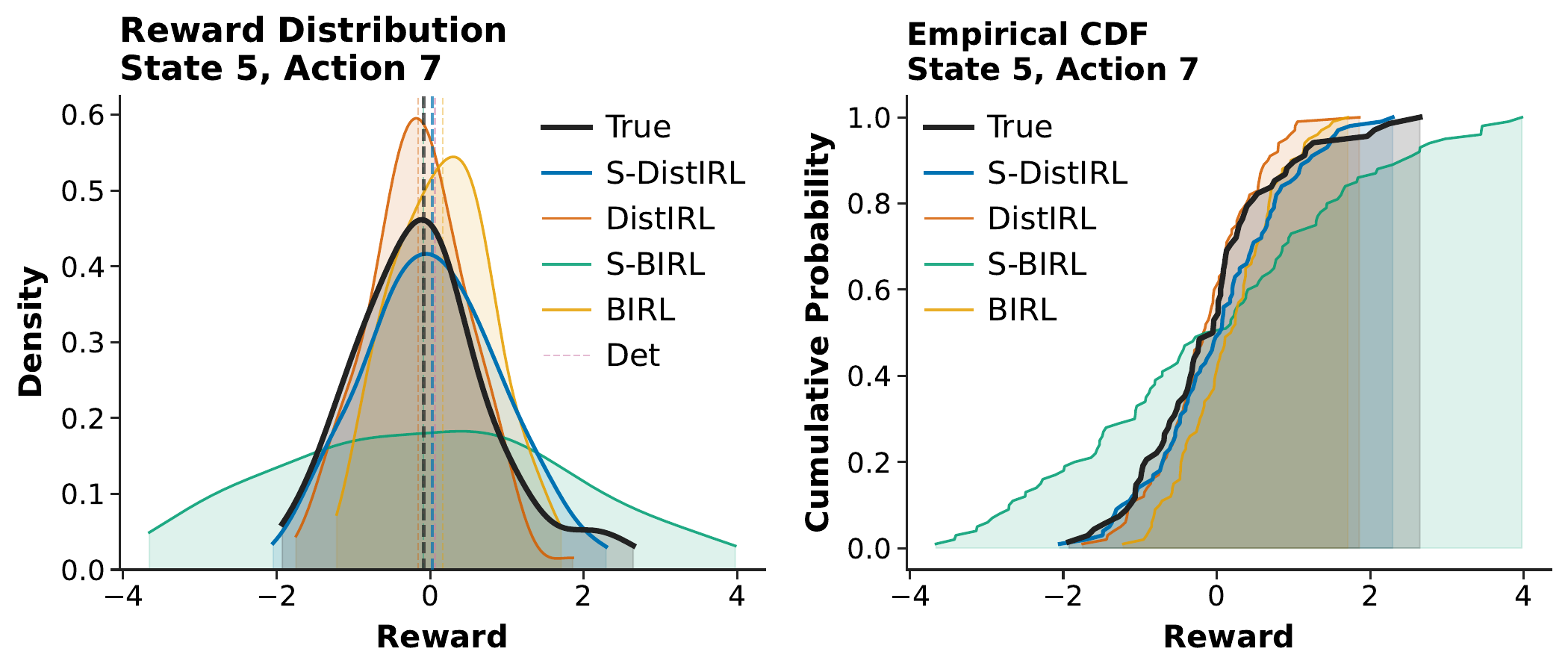}
    \caption{Reward recovery for state 5 action 7}
\end{figure}

\begin{figure}[h]
    \centering
    \includegraphics[width=0.8\textwidth]{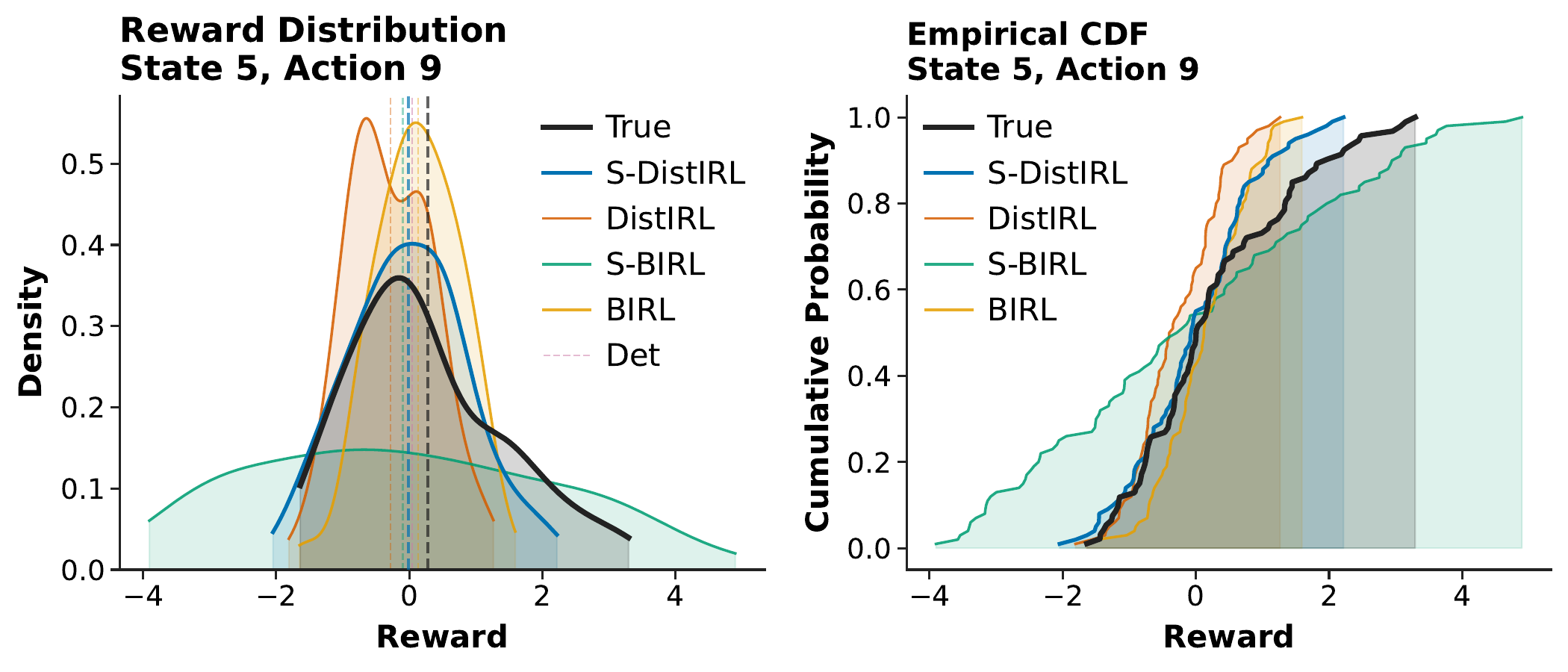}
    \caption{Reward recovery for state 5 action 9}
\end{figure}

\begin{figure}[h]
    \centering
    \includegraphics[width=0.8\textwidth]{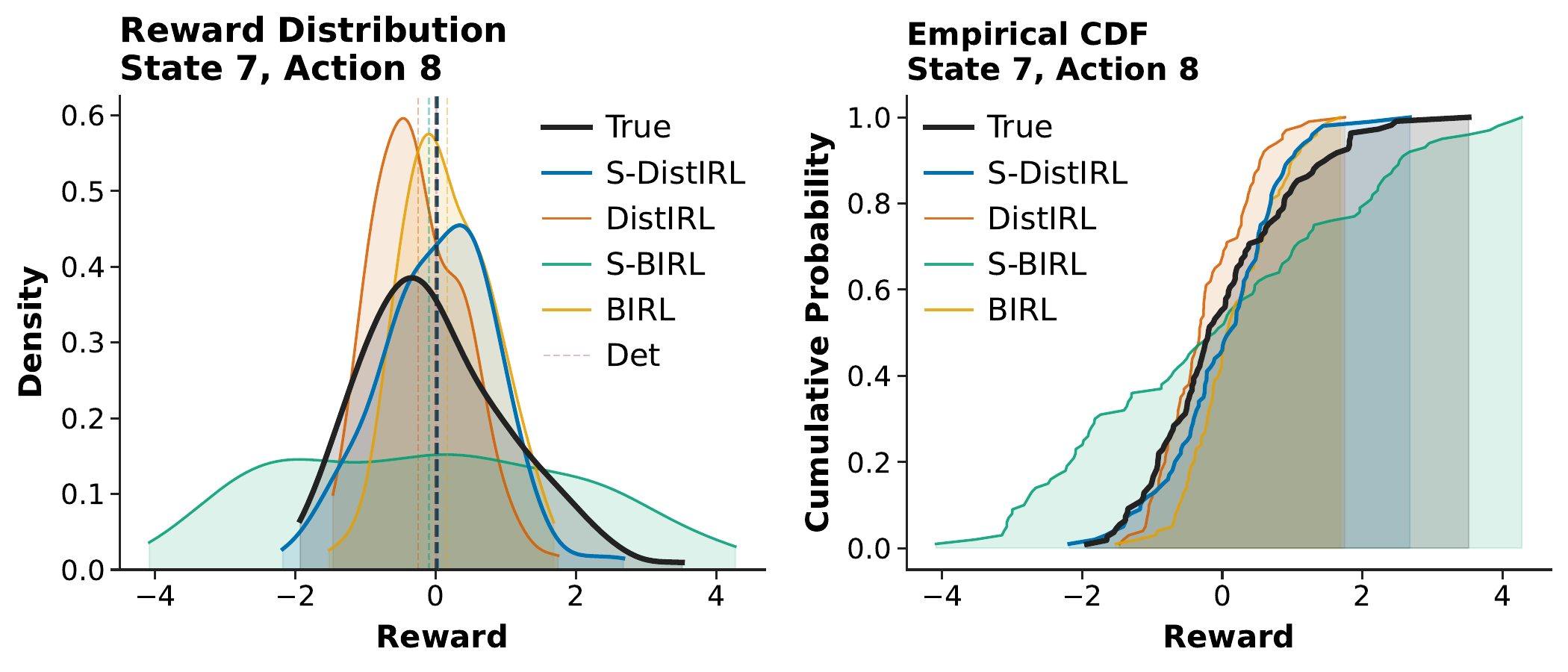}
    \caption{Reward recovery for state 7 action 8}
\end{figure}

\begin{figure}[h]
    \centering
    \includegraphics[width=0.8\textwidth]{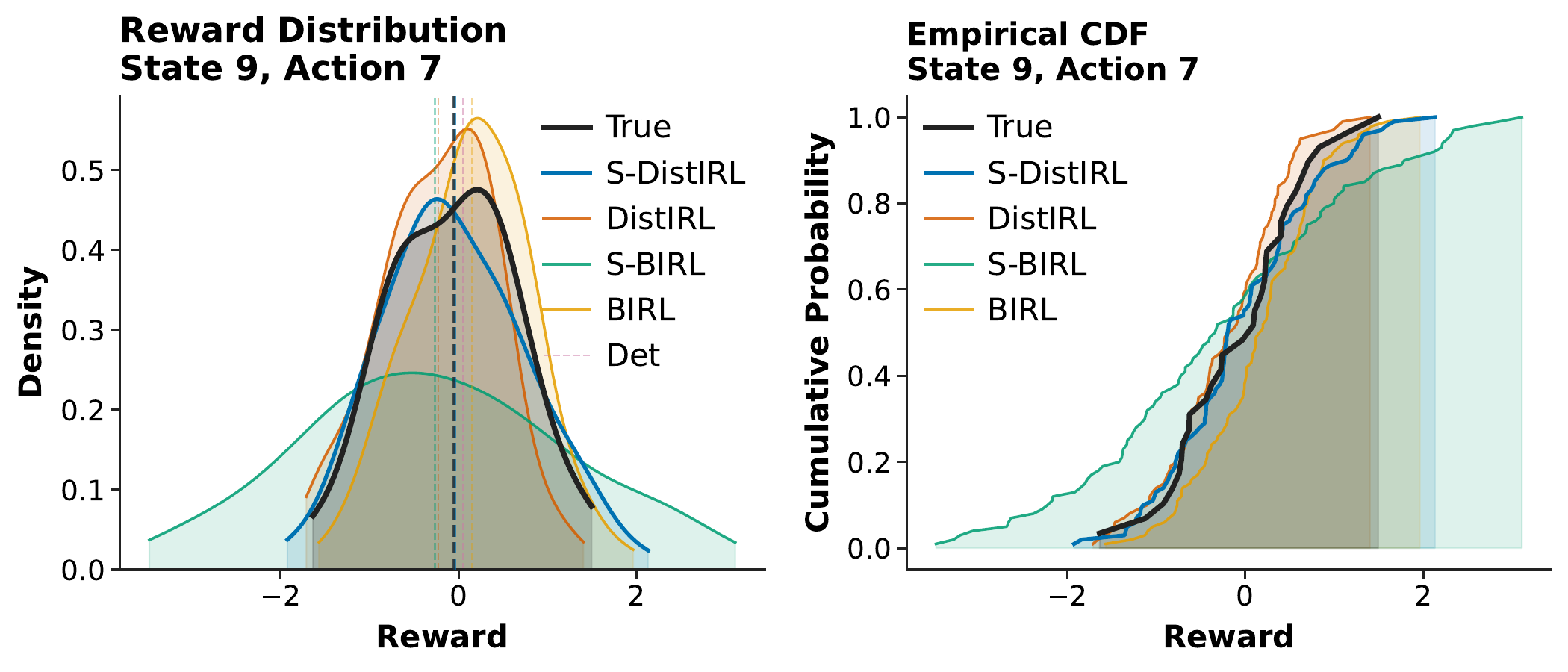}
    \caption{Reward recovery for state 9 action 7}
\end{figure}

\section{Limitations}
Our ideal policy objective (Eq.~\ref{eq:policy-learning}) enforces first-order stochastic dominance (FSD) but the indicator-based formulation is non-differentiable, making exact FSD-constrained optimization intractable in practice.
Additionally, we treat each state-action reward distribution $q_\phi(r| s,a)$ as independent. This ignores potential correlations across different $(s,a)$ pairs---such as spatial or temporal dependencies that naturally arise in many tasks. Extending DistIRL to capture joint reward distributions remains an important direction for future work.

\end{document}